\newcommand{\algcolorB}{PaleGreen3!80}
\newcommand{\algcolorC}{Khaki2!80}
\newcommand{\colorsquare}[1]{\resizebox{3mm}{!}{\colorbox{#1}{\textcolor{#1}{X}}}}
\DeclarePairedDelimiter\ceil{\lceil}{\rceil}
\DeclareMathOperator*{\expect}{\mathbb{E}}
\newcommand{\expectdelta}{\expect_{\hspace{-0mm}\bdelta \sim \pdeltastar\hspace{-2mm}}}
\newcommand{\bleu}{\textsc{Bleu}\xspace}
\newcommand{\chrf}{\textsc{ChrF}\xspace}
\newcommand{\comet}{\textsc{COMET}\xspace}
\newcommand{\bleurt}{\textsc{BLEURT}\xspace}
\newcommand{\performance}{\ensuremath{\textsc{Performance}_M}\xspace}
\newcommand{\strlen}{L}
\newcommand{\avgencodinglen}{\overline{L}_{\enc{\alphabettwo}}}
\newcommand{\avgencodinglenuni}{\overline{L}_{\encuni{\alphabettwo}}}
\newcommand{\avgencodinglenopt}{\overline{L}_{\encopt{\alphabettwo}}}
\newcommand{\avgencodinglenopts}[1]{\overline{L}_{\encopts{#1}{\alphabettwo}}}
\Crefname{section}{\S}{\S\S}
\Crefname{table}{Tab.}{}
\Crefname{figure}{Fig.}{}
\Crefname{algorithm}{Alg.}{}
\Crefname{equation}{Eq.}{Eq.}
\Crefname{definition}{Definition}{Definition}
\Crefname{appendix}{App.}{}
\Crefname{theorem}{Theorem}{}
\Crefname{myexample}{Example}{}
\Crefname{mynote}{Note}{}
\Crefname{prop}{Proposition}{}
\Crefname{cor}{Corollary}{}
\Crefname{observation}{Observation}{}
\Crefname{assumption}{Assumption}{}
\Crefname{hypothesis}{Hyp.}{Hypotheses}
\newcommand{\defeq}[0]{\mathrel{\stackrel{\textnormal{\tiny def}}{=}}}
\theoremstyle{plain}
\newtheorem{theorem}{Theorem}[section]
\newtheorem{myexample}[theorem]{Example}
\newtheorem{lemma}[theorem]{Lemma}
\newtheorem{hypothesis}[theorem]{Hypothesis}
\newtheorem{corollary}[theorem]{Corollary}
\newtheorem{definition}[theorem]{Definition}
\theoremstyle{remark}
\newcommand{\defn}[1]{\textbf{#1}}
\newcommand{\prooftext}[1]{\text{\color{black!50}#1}}
\newcommand*\iftodonotes{\if@todonotes@disabled\expandafter\@secondoftwo\else\expandafter\@firstoftwo\fi} 
\newcommand{\spacesymb}{$\rule{5pt}{1.5pt}\hspace{-5pt}\textcolor{white}{\_}\xspace$}
\newcommand{\spacesymbB}{$\rule{5pt}{1.5pt}\hspace{-5pt}\textcolor{white}{\_}$}
\newcommand{\alphabetone}{\Sigma}
\newcommand{\alphabettwo}{\Delta}
\newcommand{\psigma}{p_{\scaleto{\alphabetone}{4pt}}}
\newcommand{\pdelta}{p_{\scaleto{\alphabettwo}{4pt}}}
\newcommand{\psigmastar}{p_{\scaleto{\alphabetone^*}{4pt}}}
\newcommand{\pdeltastar}{p_{\scaleto{\alphabettwo^*}{4pt}}}
\newcommand{\psigmastarempirical}{\ensuremath{\widehat{\psigmastar}}}
\newcommand{\countc}{\mathrm{count}}
\newcommand{\Lexpect}{\mathbb{E}[L]}
\newcommand{\ecount}{\mathrm{E\text{-}count}(\delta)}
\newcommand{\stext}{\boldsymbol{\sigma}}
\newcommand{\ssymbol}{\delta}
\newcommand{\ssymbols}{\boldsymbol{\delta}}
\newcommand{\character}{\sigma}
\newcommand{\bdelta}{\boldsymbol{\delta}}
\newcommand{\bsigma}{\boldsymbol{\sigma}}
\newcommand{\rvDelta}{\mathrm{W}_{\scaleto{\alphabettwo}{4pt}}}
\newcommand{\ent}{\mathrm{H}}
\newcommand{\encn}{{\normalfont \small \textsf{enc}}}
\newcommand{\encoptn}{\encn^\star}
\newcommand{\enc}[1]{{\normalfont \small \textsf{enc}_{\scaleto{#1}{4pt}}}}
\newcommand{\encopt}[1]{{\normalfont \small \textsf{enc}^\star_{\scaleto{#1}{4pt}}}}
\newcommand{\encopts}[2]{{\normalfont \small \textsf{enc}^{#1}_{\scaleto{#2}{4pt}}}}
\newcommand{\encuni}[1]{{\normalfont \small \textsf{enc}^{\scaleto{U}{4pt}}_{\scaleto{#1}{4pt}}}}
\newcommand{\eff}{{\small \textsf{eff}}}
\newcommand{\Lwenconedelta}{\ensuremath{\mathcal{L}_\enc{\alphabettwo}(\pdelta)}}
\newcommand{\Lwenconedeltaopt}{\ensuremath{\mathcal{L}_{\encopt{\alphabettwo}}(\pdelta)}}
\newcommand{\Lwenc}[1]{\ensuremath{\mathcal{L}^{(#1)}_\enc{\alphabettwo}(\pdeltastar)}}
\newcommand{\Lwenctwo}[1]{\ensuremath{\mathcal{L}_{\encopts{#1}{\alphabettwo}}(\pdeltastar)}}
\newcommand{\Lwencone}{\ensuremath{\mathcal{L}_\enc{\alphabettwo}(\pdeltastar)}}
\newcommand{\Lwenconeopt}{\ensuremath{\mathcal{L}_{\encopt{\alphabettwo}}(\pdeltastar)}}
\newcommand{\Lwenconeoptstar}{\ensuremath{\mathcal{L}_{\encoptn}(\pdeltastar)}}
\newcommand{\Lwencdelta}[1]{\ensuremath{\mathcal{L}^{(#1)}_\enc{\alphabettwo}(\pdelta)}}
\newcommand{\Lwencdeltastaropt}[1]{\ensuremath{\mathcal{L}^{(#1)}_{\encopts{#1}{\alphabettwo}}(\pdeltastar)}}
\newcommand{\Lwencdeltaopt}[1]{\ensuremath{\mathcal{L}^{(#1)}_{\encopts{#1}{\alphabettwo}}(\pdelta)}}
\newenvironment{leftbar1}[1][\hsize]
{%
    \MakeFramed{\hsize#1\advance\hsize-\width\FrameRestore}%
}
{\endMakeFramed}
\definecolor{ethblue}{rgb}{0.0,0.0,0.0}
\newcommand{\ethletter}{
    \hspace{-0.5mm}\text{
    \fontfamily{phv}\fontseries{bx}\fontsize{7}{\baselineskip}\selectfont
    \textit{\textbf{\color{ethblue}{E}}}}
}
\definecolor{jhublue}{rgb}{0,0.1,0.4}
\newcommand{\jhuletter}{
    \hspace{-0.5mm}\text{
    \fontfamily{phv}\fontseries{bx}\fontsize{7}{\baselineskip}\selectfont
    \textbf{\color{jhublue}{J}}}
}
\newcommand{\hrefEmail}[2]{\href{mailto:#1}{\color{black}{#2}}}
\let\svthefootnote\thefootnote
\title{Tokenization and the Noiseless Channel}
\author{
Vilém Zouhar$^{\ethletter}$ \quad Clara Meister$^{\ethletter}$ \quad Juan Luis Gastaldi$^{\ethletter}$ \quad Li Du$^{\jhuletter}$ \\
\bf Mrinmaya Sachan$^{\ethletter}$ \quad Ryan Cotterell$^{\ethletter}$ \\
\text{} \\
ETH Z{\"u}rich$^{\ethletter}$ \quad Johns Hopkins University$^{\jhuletter}$ \\
\texttt{\{\hrefEmail{vzouhar@ethz.ch}{vzouhar},\hrefEmail{cmeister@ethz.ch}{cmeister},\hrefEmail{gjuan@ethz.ch}{gjuan},\hrefEmail{msachan@ethz.ch}{msachan},\hrefEmail{rcotterell@ethz.ch}{rcotterell}\}@ethz.ch}
\quad
\texttt{\hrefEmail{leodu@cs.jhu.edu}{leodu@cs.jhu.edu}}
}
\begin{document}
\maketitle

\begin{abstract}
Subword tokenization is a key part of many NLP pipelines.
However, little is known about why some tokenizer and hyperparameter combinations lead to better downstream model performance than others. 
We propose that good tokenizers lead to \emph{efficient} channel usage, where the channel is the means by which some input is conveyed to the model and efficiency can be quantified in information-theoretic terms as the ratio of the Shannon entropy to the maximum possible entropy of the
token
distribution.
Yet, an optimal encoding according to Shannon entropy assigns extremely long codes to low-frequency
tokens
and very short codes to high-frequency
tokens.
Defining efficiency in terms of Rényi entropy, on the other hand, penalizes distributions with either very high or very low-frequency 
tokens.
In machine translation, we find that across multiple tokenizers, the Rényi entropy with $\alpha = 2.5$ has a very strong correlation with \bleu: $0.78$ in comparison to just $-0.32$ for compressed length.\looseness=-1
\vspace{-12.5pt}
\end{abstract}

\let\thefootnote\relax\footnote{\hspace{-6mm}\raisebox{-1mm}{\includegraphics[width=4mm]{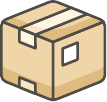}}\hspace{-0mm} We release the \texttt{\href{https://github.com/zouharvi/tokenization-scorer}{tokenization-scorer}} package (\Cref{sec:package_howto}).}
\addtocounter{footnote}{-1}\let\thefootnote\svthefootnote

\section{Introduction}
Tokenization, the practice of breaking up text into words or subword pieces, or more generally, \textit{tokens},\footnote{To avoid ambiguity, 
we eschew the common expressions \textit{word} and \textit{subword} and, instead, adopt the term \textit{token} to mean an element of the vocabulary \emph{after} tokenization.
We formally define token in \cref{sec:tokenization_compression}.} is often the first step in an NLP pipeline.
A wide variety of tokenization functions have been proposed in the NLP literature \cite{mermer2010unsupervised,sennrich2016,kudo2018subword}.
And, indeed, research on developing a good tokenization function continues because \emph{how} one tokenizes may have a large impact on model performance in the downstream task.
For instance, \citet{gowda2020finding} note \bleu ranges from 28 to 37 just by changing the size of the vocabulary in their machine translation (MT) pipeline. 
A direct \emph{extrinsic} evaluation of a tokenization function, however, is computationally intensive: One first has to retokenize the corpora (generally quick), but then retrain the NLP model (often computationally intensive) to evaluate the effect.
For this reason, characterizing the \emph{intrinsic} properties of a good tokenization function has practical benefits \citep{galle2019investigating,gowda2020finding}.\looseness=-1

Our paper takes an information-theoretic approach to characterizing a good tokenization function.
Following \citet{galle2019investigating}, we contend that tokenization may be fruitfully viewed as determining a good dictionary code for a language.
Fortunately, dictionary codes are equipped with a natural intrinsic metric of utility (expected code length) whereas there are many ways to extrinsically measure the tokenization quality.
For simplicity and in line with previous research, we choose a specific downstream task metric: \bleu \citep{papineni2002bleu} in the domain of MT.\footnote{In \Cref{tab:comparison_otherlangs} (Appendix) we replicate the findings with \chrf \citep{popovic2015chrf}, \bleurt \citep{sellam2020bleurt} and \comet \citep{rei2020comet}.}
 
We hypothesize that, \textit{ceteris paribus}, downstream task metrics should correlate with the expected code length of the unigram token distribution. 
While not immediately intuitive, the motivation is that  there is a theoretical connection between the expected code length (under an optimal encoder) of a token distribution and that distribution's  Shannon entropy: The latter gives us a lower bound on the former. And given a fixed vocabulary size, higher entropy token distributions are more desirable because they are more balanced, i.e., there are fewer tokens that occur too rarely or too frequently. 
This characteristic should in turn balance a model's ability to learn representations for the entire vocabulary, which requires exposure to enough instances of each token while also penalizing the use of very frequent character sequences as tokens, which is often inefficient due to their lack of distinct meaning.\looseness=-1

Yet when using Shannon entropy as our metric of a distribution's balance, the optimal token distribution may still include a large number of infrequent tokens. 
This behavior may be undesirable for a number of reasons that we subsequently discuss. 
Accordingly, we formulate the \defn{compression principle}, which states that downstream task metrics, e.g., \bleu, should correlate with the expected code length subject to a penalty for long codewords (which correspond to infrequent tokens).
Consequently, we introduce a more nuanced formulation of efficiency that employs Rényi entropy \citep{renyi1961measures},\footnote{\citet{campbell1965coding} shows that Rényi gives a lower-bound on the expected length of an optimal code subject to a length penalty, i.e., overly long or short codewords are penalized.} whose hyperparameter $\alpha$ allows us to penalize the use of long codes to varying degrees.\looseness=-1

In the experimental portion of our paper, we predict the performance of MT models. 
We find that the channel efficiency with Rényi entropy with $\alpha = 2.5$ yields a Pearson correlation of $0.78$ with \bleu on German $\rightarrow$ English MT (1M parallel sentences from CommonCrawl).
This stands in contrast to Shannon entropy or expected sequence length, which yield Pearson correlations of only $0.22$ and $-0.30$, respectively.

We also provide an easy-to-use package to score tokenizations.
See \Cref{sec:package_howto} for usage instructions.

\section{Tokenization}
\label{sec:tokenization_compression}

Tokenization is generally defined informally as the breaking up of text into a sequence of tokens which are then encoded into a machine-interpretable format.
However, to proceed with our analysis, we require a more formal treatment. 
First, we assume that there exists an alphabet, a finite, non-empty set of \defn{characters} $\alphabetone$.
We call a string of characters $\bsigma = \langle\sigma_1 \sigma_2\cdots \sigma_N\rangle \in \alphabetone^*$ a \defn{text}.
In this formulation, we assume that the alphabet $\alphabetone$ includes all characters, including punctuation and a distinguished white space character.
Finally, an unordered multiset of texts $\{ \bsigma_1, \ldots, \bsigma_M\} \subset \alphabetone^*$ is termed a \defn{corpus} of size $M$.
We denote the true distribution over all texts as $\psigmastar$.
Every $\psigmastar$ induces a marginal distribution over $\alphabetone$, which we call the 
$\alphabetone$-\defn{unigram distribution}:
\begin{equation}
\psigma(\character) \defeq \sum_{\stext \in \alphabetone^*} \psigmastar(\stext)\,\frac{\countc(\character, \stext)}{|\stext|}
\label{def:unigram_distribution}
\end{equation}
where $\countc(\character, \stext)$ returns the number of times the character $\character$ appears in text $\stext$.
In general, we do not have access to $\psigmastar$ but rather only to samples from $\psigmastar$ with which we can represent an empirical distribution $\psigmastarempirical$.
Our formal analysis, however, will consider $\psigmastar$.
 
Let $\alphabettwo$ be a second alphabet, which we call the \defn{tokenization alphabet}.
We define a \defn{tokenization function} $t: \alphabetone^* \rightarrow D \subseteq \alphabettwo^*$ as a function mapping texts in alphabet $\alphabetone$ to sequences of \defn{tokens} in $D =t(\alphabetone^*)$,
One popular choice in NLP is to have $\alphabetone$ be a set of Unicode characters and $\alphabettwo$ be a set of \emph{strings} of Unicode characters.
In this case, the tokenization function $t$ segments the text $\stext$ into tokens corresponding to smaller chunks of text.
There are many approaches for devising different $t$'s; a brief overview of some of them is offered in \Cref{sec:tokenizer_details}.\looseness=-1

Furthermore, for our purposes, it is useful to restrict tokenization functions to those that are invertible (bijections), i.e., rules where we can undo the tokenization.
This way, the original text can be reconstructed and no information is lost during tokenization.\looseness=-1

\begin{myexample}
\label{example:cows_text}
Any injective tokenization function, i.e., mapping different inputs $\bdelta', \bdelta'' $ to different outputs $\stext', \stext''$, satisfies our requirements.
As a counter example, consider a tokenization function $t_1$ for which $t_1(\texttt{two{\spacesymb}cows}) = \langle \texttt{two}, \spacesymbB, \texttt{[UNK]} \rangle$  and $t_1(\texttt{two{\spacesymb}birds}) = \langle \texttt{two}, \spacesymbB, \texttt{[UNK]} \rangle$.
The $t_1$'s lack of injectivity prevents us from recovering the original text from the token sequence $\langle \texttt{two}, \spacesymbB, \texttt{[UNK]} \rangle$.\looseness=-1

\end{myexample}

Because of our restriction to invertible tokenization functions, with a change of variable we can convert the distribution over texts in
$\alphabetone^*$ into one over token sequences $\bdelta$ in $D$ in a straightforward manner: $\pdeltastar(\bdelta) =  \psigmastar(t^{-1}(\bdelta))$.
Note that the pushforward $\pdeltastar(\bdelta)$ induces a distribution over $\alphabettwo^*$ but with support limited to $D$.\looseness=-1

In applied NLP, there is currently no widely accepted notion of the \textit{intrinsic} quality of a tokenization function. 
Rather, practitioners are generally interested in its \emph{extrinsic} performance, i.e., the performance of a model trained on a corpus tokenized using a certain tokenization function.
Under such an evaluation, given two tokenization functions, the one that enables better performance on the downstream task is taken to be better.
However, gauging the quality of a tokenizer function in this manner is computationally expensive.
Thus, we develop an information-theoretic intrinsic evaluation.\looseness=-1
\section{Communication in a Noiseless Channel}
\label{sec:first_compression_principle}

Our analysis of tokenization schemes relies on the following framing: Our ultimate goal when tokenizing a text $\stext \sim \psigmastar$ is the transmission of this text across a hypothetical channel.  
To perform this feat, we first tokenize $\stext$ into a sequence in $D \subseteq \alphabettwo^*$.
We then encode each token in $\alphabettwo$ as a sequence of \defn{symbols} from the set $\{1,\ldots, b\}$, where $b$ is determined by the channel. 
Our goal is to analyze the properties of tokenization schemes that lead to models with good downstream performance.

In the case of a noisy channel, we seek an encoding scheme that will help ensure that $\bsigma$ is resilient to noise in addition to efficiently encoding $\bsigma$.
However, in the noiseless case, we \emph{only} care about efficiency. 
We can assume that we are working with a noiseless channel because, in the process of encoding data, no information is ever altered by a stochastic process.
In this case, one can equivalently think of noiseless channel encoding as compression.
Thus, our analysis proceeds by considering the efficiency of different tokenization functions as if our goal is to use them to communicate over a noiseless channel.
To this end, we first discuss the conditions for building such an encoding and then discuss the concept of efficient channel usage.\looseness=-1

\begin{definition}
A \defn{token-level encoder} $\enc{\alphabettwo}$ is a function $\enc{\alphabettwo} : \alphabettwo \rightarrow \{1,\ldots, b\}^*$ that maps every token $\delta \in \alphabettwo$ to a \defn{string} of symbols in base $b$, which we call a codeword.
We can naturally lift the token-level encoder to a \defn{sequence-level encoder} using concatenation as $\enc{\alphabettwo}(\ssymbols) = \bigoplus_{n = 1}^{|\ssymbols|} \enc{\alphabettwo}(\ssymbol_n)$.\footnote{Whitespace information is not lost here because it is included in the tokens (see \Cref{example:cows_text}).}
\end{definition}

In order to be able to uniquely decode a string $\ssymbols$, we further require that $\enc{\alphabettwo}$ produces prefix-free\footnote{Prefix-free means that no codeword is a prefix of any other codeword.} codes for all tokens in $\alphabettwo$.
As an example, Huffman encoding provides a fast and nearly optimal (in a sense to be discussed in the subsequent section) method to construct prefix-free codes \citep{huffman1952method}.\looseness=-1

\begin{myexample}[One-hot encoding]
Consider a tokenization alphabet $\alphabettwo$. 
In NLP, when $b=2$, the most straightforward way of encoding the $n^{\text{th}}$ element of $\alphabettwo$ is a vector of zeroes with length $|\alphabettwo|$ with $1$ on position $n$.
\end{myexample}

\begin{myexample}[Transmission]
\label{ex:cow_encoding}
\newcommand{\colorCowA}{PaleGreen4}
\newcommand{\colorCowB}{SteelBlue4}
\newcommand{\colorCowC}{DeepPink4}

We consider an arbitrary encoder $\enc{{\alphabettwo}}$ over a given alphabet $\alphabettwo$ and a channel with $b=2$.
Now given a text and some tokenization function $t(\texttt{two{\spacesymbB}cows})$ $= \langle \texttt{two\spacesymb}, \texttt{cow}, \texttt{s} \rangle$ we apply the encoder: $\enc{{\alphabettwo}}(\texttt{two\spacesymb}) = \textcolor{\colorCowA}{\texttt{1010101}}$ $\enc{{\alphabettwo}}(\texttt{cow})$ $= \textcolor{\colorCowB}{\texttt{101111101}}$ and $\enc{{\alphabettwo}}(\texttt{s})$ $= \textcolor{\colorCowC}{\texttt{01010}}$ and for the whole
sequence
$\enc{{\alphabettwo}}(t(\texttt{two{\spacesymbB}cows}))$ $= \textcolor{\colorCowA}{\texttt{1010101}}\textcolor{\colorCowB}{\texttt{101111101}}\textcolor{\colorCowC}{\texttt{01010}}$.
\end{myexample}

For the remainder of the paper, we will not be interested in any specific $\enc{\alphabettwo}$, but rather in an \emph{optimal} token-level encoder that we can achieve, as measured by expected code length measures.\looseness=-1
\begin{definition}
The \defn{expected code length} $\mathcal{L}_\enc{\alphabettwo}$ of a token-level encoder $\enc{\alphabettwo}$ is defined as
\begin{align}\label{eq:expected-code-length}
\Lwenconedelta = \sum_{\delta \in \alphabettwo} \pdelta(\delta) |\enc{\alphabettwo}(\delta)|
\end{align}
\end{definition}
A well-known result from information theory tells us that \cref{eq:expected-code-length} is bounded by the Shannon entropy of $\rvDelta$, a $\alphabettwo$-valued random variable with law $\pdelta$.
To introduce the theorem, we first define Shannon entropy.\looseness=-1

\begin{definition}
The \defn{Shannon entropy} of $\rvDelta$ is defined as\looseness=-1
\begin{equation}
\ent(\rvDelta) \defeq -\sum_{\delta \in \alphabettwo} \pdelta(\delta) \log \pdelta(\delta)
\end{equation}
For channels using $b$ symbols for transmission, the logarithm is of base $b$.
Traditionally in information theory, ones takes $b=2$.\looseness=-1
\end{definition}

\begin{theorem}[Shannon's Source Coding
Theorem]
\label{thm:shannon_source_coding}
Let $\rvDelta$ be a $\alphabettwo$-valued random variable with law $\pdelta$ and let $\enc{\alphabettwo}$ be an encoder.
Then, 
\begin{equation}
    \ent(\rvDelta) \leq \Lwenconedelta
\end{equation}
with the optimal token-level encoder $\encopt{\alphabettwo}$ satisfying\looseness=-1
\begin{equation}
     \Lwenconedeltaopt \leq \ceil*{\ent(\rvDelta) }
\end{equation}
\end{theorem}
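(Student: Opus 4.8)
The plan is to prove the two inequalities of \Cref{thm:shannon_source_coding} separately, as they correspond to the classical converse and achievability halves of the source coding theorem.

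\textbf{Lower bound.} First I would establish $\ent(\rvDelta) \leq \Lwenconedelta$ for \emph{any} prefix-free encoder $\enc{\alphabettwo}$. The key tool is the Kraft inequality, which states that for any prefix-free code over an alphabet of $b$ symbols, the codeword lengths $\ell_\delta \defeq |\enc{\alphabettwo}(\delta)|$ satisfy $\sum_{\delta \in \alphabettwo} b^{-\ell_\delta} \leq 1$. Writing the gap between expected length and entropy, I would form the difference
\begin{equation}
\Lwenconedelta - \ent(\rvDelta) = \sum_{\delta \in \alphabettwo} \pdelta(\delta) \log_b \frac{\pdelta(\delta)}{b^{-\ell_\delta}}.
\end{equation}
Letting $q(\delta) \defeq b^{-\ell_\delta} / Z$ with $Z \defeq \sum_{\delta} b^{-\ell_\delta} \leq 1$, the right-hand side equals $\mathrm{KL}(\pdelta \,\|\, q) - \log_b Z$. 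Since KL divergence is nonnegative (Gibbs' inequality, provable via $\log x \leq (x-1)/\ln b$ or Jensen) and $-\log_b Z \geq 0$ because $Z \leq 1$, the whole expression is nonnegative, giving the claim.

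\textbf{Upper bound.} For achievability I would exhibit a specific encoder whose expected length is at most $\ceil*{\ent(\rvDelta)}$, which then bounds the optimal encoder $\encopt{\alphabettwo}$ from above. The standard construction is Shannon coding: assign each token $\delta$ a codeword of length $\ell_\delta \defeq \ceil*{-\log_b \pdelta(\delta)}$. These lengths satisfy the Kraft inequality since $\sum_\delta b^{-\ell_\delta} \leq \sum_\delta b^{\log_b \pdelta(\delta)} = \sum_\delta \pdelta(\delta) = 1$, so a prefix-free code with these lengths exists. Its expected length is
\begin{equation}
\sum_{\delta} \pdelta(\delta) \ceil*{-\log_b \pdelta(\delta)} < \sum_{\delta} \pdelta(\delta)\bigl(-\log_b \pdelta(\delta) + 1\bigr) = \ent(\rvDelta) + 1.
\end{equation}
This gives $\Lwenconedeltaopt < \ent(\rvDelta) + 1$, and since $\encopt{\alphabettwo}$ is optimal it is no worse than the Shannon code. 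To obtain the sharper stated bound $\ceil*{\ent(\rvDelta)}$ rather than $\ent(\rvDelta)+1$, I would note that the codeword-length budget only needs the single ceiling $\ceil*{\ent(\rvDelta)}$ worth of room; concretely one verifies the optimal (e.g. Huffman) code satisfies $\Lwenconedeltaopt \leq \ceil*{\ent(\rvDelta)}$ by the same Kraft-feasibility argument applied with a uniform length cap.

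\textbf{Main obstacle.} The lower bound is routine given Kraft and Gibbs. The delicate point is the upper bound: the naive Shannon-coding argument yields the looser $\ent(\rvDelta)+1$, whereas the theorem asserts the tighter $\ceil*{\ent(\rvDelta)}$. Closing this gap is the part I expect to require the most care, since it hinges on exploiting optimality of $\encopt{\alphabettwo}$ (or a careful global length assignment) rather than the per-symbol rounding that produces the weaker bound, and on confirming Kraft feasibility for the capped lengths.
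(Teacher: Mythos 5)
Your lower bound (Kraft plus Gibbs) and your Shannon-coding construction are both correct and are the standard textbook argument; note also that the paper never proves this theorem itself (it is invoked as a classical result), so the comparison here is purely on the merits of your attempt. The genuine problem is the final step you flagged as the ``main obstacle,'' and it is worse than delicate: it cannot be carried out, because the inequality $\Lwenconedeltaopt \leq \ceil*{\ent(\rvDelta)}$ is false as stated. Take $b=2$ and $\alphabettwo = \{\delta_1,\delta_2,\delta_3\}$ with probabilities $1-2\varepsilon$, $\varepsilon$, $\varepsilon$ for small $\varepsilon>0$. A prefix-free binary code on three tokens can contain at most one codeword of length $1$, so the optimal code has lengths $(1,2,2)$ and
\begin{equation}
\Lwenconedeltaopt = (1-2\varepsilon)\cdot 1 + \varepsilon \cdot 2 + \varepsilon\cdot 2 = 1 + 2\varepsilon ,
\end{equation}
while $\ent(\rvDelta) \to 0$ as $\varepsilon \to 0$, so that $\ceil*{\ent(\rvDelta)} = 1 < 1+2\varepsilon$ for $\varepsilon$ small. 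The correct classical statement is exactly what your construction delivers: $\ent(\rvDelta) \leq \Lwenconedeltaopt < \ent(\rvDelta) + 1$.

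Your proposed patch also fails on its own terms: assigning every token the uniform length $\ceil*{\ent(\rvDelta)}$ is Kraft-feasible only if $|\alphabettwo|\, b^{-\ceil*{\ent(\rvDelta)}} \leq 1$, i.e.\ only if $\log_b |\alphabettwo| \leq \ceil*{\ent(\rvDelta)}$; since the entropy can be arbitrarily small while the alphabet is large, this is unavailable in general (the counterexample above already violates it, as $\log_2 3 > 1$), and no appeal to Huffman optimality changes which length multisets are Kraft-feasible. So the honest resolution is not to close the gap but to recognize that the ceiling bound is an over-tightening of Shannon's theorem; your proof establishes the correct form, and the paper's downstream uses of this bound (e.g., in the proofs of \Cref{th:codebounded} and \Cref{thm:efficiencybound}) would go through with $\ent(\rvDelta)+1$ substituted for $\ceil*{\ent(\rvDelta)}$, at the cost of slightly weaker constants.
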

This theorem tells us that if we wish to communicate tokens from the alphabet $\alphabettwo$ through a noiseless channel, their minimum expected length for any possible encoding is bounded by the Shannon entropy of the distribution $\pdelta$. 
An optimal token-level encoder will produce codes with the expected length within those exact bounds. 
We can prove a very similar result to Shannon's source coding theorem \citep{shannon1948mathematical} that tells us how well we can optimally encode a $\alphabettwo^*$-valued source using only the token-level encoder.\looseness=-1

To this end, we first introduce the notions of expected sequence length and average per-token encoding length, and then offer a lower-bound on the compression achievable using only a token-level encoder.
We additionally define three random variables that will prove useful in our analysis; all of them are pushforwards of $\pdeltastar$.

Let $\strlen$ be a random variable whose values range over strings' lengths, i.e., $L(\bdelta) = |\bdelta|$. 
The expected token sequence length $\Lexpect$ for sequences sampled according to $\pdeltastar$ is then\looseness=-1 
\begin{gather}
\Lexpect  =\sum_{\bdelta \in \alphabettwo^*} \pdeltastar(\bdelta) |\bdelta|
\end{gather}
where for notational simplicity, we leave the dependence of this expectation on $\pdeltastar$ implicit as it will always be clear from context.
Let $X_{\delta}(\bdelta) = \frac{\countc(\delta, \bdelta)}{|\bdelta|}$ be the unigram random variable, i.e., a function that returns the proportion of $\bdelta$ that consists of a particular $\delta$. 
Finally, define the random variable $\avgencodinglen(\bdelta) = \sum_{\delta \in \alphabettwo} X_{\delta}(\bdelta) |\enc{\alphabettwo}(\delta)|$.

We now turn to our first major theorem.
\begin{restatable}{theorem}{codebounded}
\label{th:codebounded}
Let $\pdeltastar$ be a distribution over $\alphabettwo^*$, and let $\pdelta$ be the unigram distribution induced by $\pdeltastar$ (\cref{def:unigram_distribution}).
Then, for an optimal token-level encoder $\encopt{\alphabettwo} : \alphabettwo \rightarrow \{1,\ldots, b\}^*$ lifted to the sequence level, the following lower and upper bounds hold:\footnote{Be careful not to confuse $\encopts{\star}{\alphabettwo}$, an optimal token-level encoder (here lifted to the sequence level), with $\enc{\alphabettwo^*}$, an arbitrary sequence-level encoder, which we usually denote $\encn$ when clear from context.}
\begin{flalign}
&\begin{aligned}
\ent(\rvDelta) &\leq \frac{\Lwenconeopt - \mathrm{Cov}(\avgencodinglenopt,\strlen)}{\Lexpect}  \\
&\leq \lceil \ent(\rvDelta) \rceil
\end{aligned}
&&
\end{flalign}
\end{restatable}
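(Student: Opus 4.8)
The plan is to recognize that the covariance term in the numerator is precisely what converts the expected \emph{total} sequence code length $\Lwenconeopt$ into the expected \emph{average per-token} code length, and that the latter coincides with the ordinary expected code length of the unigram distribution, to which Shannon's Source Coding Theorem (\Cref{thm:shannon_source_coding}) applies directly.

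First I would observe that, by the definitions of $\avgencodinglenopt$ and $\strlen$, the total sequence code length factors as $|\encopt{\alphabettwo}(\bdelta)| = \avgencodinglenopt(\bdelta)\,\strlen(\bdelta)$, since grouping the per-token code lengths by token type gives $\avgencodinglenopt(\bdelta) = \sum_{\delta} \frac{\countc(\delta,\bdelta)}{|\bdelta|}|\encopt{\alphabettwo}(\delta)| = |\encopt{\alphabettwo}(\bdelta)|/|\bdelta|$. Taking the expectation over $\bdelta \sim \pdeltastar$ then yields $\Lwenconeopt = \expect[\avgencodinglenopt\,\strlen]$. Applying the elementary identity $\expect[XY] = \expect[X]\expect[Y] + \mathrm{Cov}(X,Y)$ with $X = \avgencodinglenopt$ and $Y = \strlen$ gives $\Lwenconeopt = \expect[\avgencodinglenopt]\,\Lexpect + \mathrm{Cov}(\avgencodinglenopt,\strlen)$, so that the middle quantity of the theorem, after subtracting the covariance and dividing by $\Lexpect$, is simply $\expect[\avgencodinglenopt]$, the expected average per-token code length.

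Next I would identify this expectation with the unigram expected code length. Writing out $\expect[\avgencodinglenopt]$, interchanging the order of the summations over $\bdelta$ and $\delta$, and recognizing the inner sum $\sum_{\bdelta} \pdeltastar(\bdelta)\,\frac{\countc(\delta,\bdelta)}{|\bdelta|}$ as exactly the unigram probability $\pdelta(\delta)$ from \cref{def:unigram_distribution}, I obtain $\expect[\avgencodinglenopt] = \sum_{\delta} \pdelta(\delta)\,|\encopt{\alphabettwo}(\delta)| = \Lwenconedeltaopt$. Finally, invoking \Cref{thm:shannon_source_coding} for the $\alphabettwo$-valued random variable $\rvDelta$ with law $\pdelta$ and its optimal encoder delivers $\ent(\rvDelta) \leq \Lwenconedeltaopt \leq \lceil \ent(\rvDelta)\rceil$, which is exactly the claimed chain of inequalities.

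The main obstacle is conceptual rather than computational: spotting that the covariance correction is precisely the term that decouples sequence length from per-token length, leaving the per-token average as the object that Shannon's bound governs. The only technical care required is justifying the interchange of the two summations over $\alphabettwo^*$ and $\alphabettwo$, which is immediate by Tonelli's theorem since every summand is nonnegative.
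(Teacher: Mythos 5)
Your proposal is correct and follows essentially the same route as the paper: the paper's \Cref{lemma:cov} establishes exactly the identity you derive, namely $\Lwenconeopt = \Lexpect \cdot \Lwenconedeltaopt + \mathrm{Cov}(\avgencodinglenopt,\strlen)$, and then applies \Cref{thm:shannon_source_coding} to the unigram expected code length, just as you do. The only difference is organizational: you apply the covariance identity once to the pointwise factorization $|\encopt{\alphabettwo}(\bdelta)| = \avgencodinglenopt(\bdelta)\,\strlen(\bdelta)$ and then swap summations, whereas the paper first rewrites the sequence-level code length via per-token expected counts $\expectdelta[X_{\delta}(\bdelta)\cdot\strlen]$ and reassembles the covariance term by bilinearity --- the same computation performed in a different order.
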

\begin{proof}
The proof is given in \Cref{sec:proofs}.
\end{proof}

In the special case of Shannon entropy, we additionally arrive at the following stronger inequality:
\begin{equation}
    \Lwenconeoptstar \leq  \Lwenconeopt
\end{equation}
This holds because $\encoptn$ is \emph{not} constrained to token-level codes and the unconstrained minimum over all codes is naturally lower than the constrained version. 
As a concrete example, even if two $\delta', \delta'' \in \alphabettwo$ \emph{always} appear together in practice, $\encopt{\Delta}$ must assign both $\delta'$ and $\delta''$ their own unique code.
Such a constraint does not apply to $\encoptn$.
We foreshadow that this inequality does \emph{not} generalize to R{\'e}nyi entropy, as discussed in \cref{sec:renyi}.\looseness=-1

\Cref{th:codebounded} tells us that the expected code length of a sequence-level encoder, based on a token-level encoder, is proportional to the expected code length of the unigram distribution up to an additive covariance factor.
This allows us to determine both a lower-bound for the expected code length of such an encoder and
an upper-bound for the expected code length of a sequence-level encoder based on an optimal token-level encoder.\looseness=-1

We are now in the position to return to the main objective of this paper: Assessing the quality of different tokenizers. 
One natural way of comparing tokenizers would be to compare properties of the distributions over tokens that they each induce. 
At first glance, Shannon entropy looks like the most obvious candidate for such a property. 
However, for distributions over $\alphabettwo$ of different sizes, it is not directly comparable. The efficiency of a tokenization function addresses this issue. \looseness=-1
\begin{definition}
\label{def:channel_efficiency}
Let $\psigmastar$ be a distribution over $\alphabetone^*$, let $t : \alphabetone^* \rightarrow \alphabettwo^*$ be a tokenization function, and let $\pdeltastar$ be the distribution over $\alphabettwo^*$ induced by $t$.
The \defn{efficiency} of $t$ is defined as
\begin{equation}
    \eff(\psigmastar,t) \defeq \frac{\Lwenconeopt}{\mathcal{L}_{\encuni{\alphabettwo}}(\pdeltastar)}
\end{equation}
where $\encopt{\alphabettwo}$ is an optimal token-level encoder and $\encuni{\alphabettwo}$ a uniform encoder that assigns all tokens in $\alphabettwo$ codes of equal length: $\lceil\log |\alphabettwo|\rceil$.

\begin{restatable}{theorem}{efficiencybound}
\label{thm:efficiencybound}
The efficiency of $t$ is upper-bounded by
\begin{equation}\label{eq:efficiency-upper-bound}
\frac{\lceil \ent(\rvDelta)\rceil +\frac{\mathrm{Cov}\left( \avgencodinglenopt, \strlen \right)}{\Lexpect}}{\log |\alphabettwo|} \geq \eff(\psigmastar, t) 
\end{equation}
and lower-bounded by
\begin{equation}
\frac{\ent(\rvDelta) +\frac{\mathrm{Cov}\left( \avgencodinglenopt, \strlen \right)}{\Lexpect}}{\lceil\log |\alphabettwo|\rceil} \leq \eff(\psigmastar, t) 
\end{equation}
where $\rvDelta$ is a $\alphabettwo$-valued random variable with law $\pdelta$, the unigram distribution induced by $t$.
\end{restatable}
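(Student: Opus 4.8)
The plan is to reduce everything to \Cref{th:codebounded} after first evaluating the denominator of the efficiency ratio in closed form. The key observation is that the uniform encoder $\encuni{\alphabettwo}$ assigns every token a codeword of the same length $\lceil \log |\alphabettwo|\rceil$, so lifting it to the sequence level and taking the expectation under $\pdeltastar$ gives
\begin{equation}
\mathcal{L}_{\encuni{\alphabettwo}}(\pdeltastar) = \sum_{\bdelta \in \alphabettwo^*} \pdeltastar(\bdelta)\, |\bdelta|\, \lceil \log |\alphabettwo|\rceil = \lceil \log |\alphabettwo|\rceil \, \Lexpect,
\end{equation}
since the lifted codeword length of a sequence $\bdelta$ is exactly $|\bdelta|\,\lceil \log |\alphabettwo|\rceil$. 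Substituting this into the definition of efficiency yields $\eff(\psigmastar, t) = \Lwenconeopt / \big(\lceil \log |\alphabettwo|\rceil\, \Lexpect\big)$.

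Next I would rearrange the two-sided bound of \Cref{th:codebounded}. Writing $H = \ent(\rvDelta)$ and $C = \mathrm{Cov}(\avgencodinglenopt, \strlen)$ for brevity, that theorem states $H \le (\Lwenconeopt - C)/\Lexpect \le \lceil H \rceil$. Splitting the middle term as $\Lwenconeopt/\Lexpect - C/\Lexpect$ and adding $C/\Lexpect$ throughout gives
\begin{equation}
H + \frac{C}{\Lexpect} \;\le\; \frac{\Lwenconeopt}{\Lexpect} \;\le\; \lceil H \rceil + \frac{C}{\Lexpect}.
\end{equation}
Dividing by the positive constant $\lceil \log |\alphabettwo|\rceil$ and recognizing the middle quantity as $\eff(\psigmastar, t)$ immediately yields the claimed lower bound, together with the intermediate upper bound $\eff(\psigmastar, t) \le \big(\lceil H \rceil + C/\Lexpect\big)/\lceil \log |\alphabettwo|\rceil$.

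Finally, to obtain the stated upper bound with $\log |\alphabettwo|$ (no ceiling) in the denominator, I would invoke $\log |\alphabettwo| \le \lceil \log |\alphabettwo|\rceil$. Because the numerator $\lceil H \rceil + C/\Lexpect$ is nonnegative --- it dominates $\Lwenconeopt/\Lexpect \ge 0$ --- shrinking the denominator can only increase the fraction, so $\eff(\psigmastar, t) \le \big(\lceil H \rceil + C/\Lexpect\big)/\log |\alphabettwo|$, as required.

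I do not expect any genuine obstacle here: the real content lives in \Cref{th:codebounded}, and the remaining work is the closed-form evaluation of the uniform encoder's expected length plus routine algebraic rearrangement. The only points requiring slight care are confirming the sign of the numerator before relaxing the ceiling in the denominator, and being explicit about the intentional asymmetry between the two bounds --- the lower bound retains $\lceil \log |\alphabettwo|\rceil$ (keeping it as tight as the definition allows), while the upper bound relaxes to $\log |\alphabettwo|$ to obtain a clean, valid --- if slightly looser --- expression.
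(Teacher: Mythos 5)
Your proof is correct and takes essentially the same route as the paper's: bound the numerator via \Cref{th:codebounded}, handle the denominator through the uniform encoder (whose per-token code length is constant, so its covariance with $\strlen$ vanishes), and relax $\lceil \log|\alphabettwo|\rceil$ to $\log|\alphabettwo|$ only in the upper bound. The only differences are presentational and in your favor: you evaluate $\mathcal{L}_{\encuni{\alphabettwo}}(\pdeltastar) = \Lexpect\,\lceil\log|\alphabettwo|\rceil$ exactly, where the paper states the two-sided bound $\Lexpect\log|\alphabettwo| \leq \mathcal{L}_{\encuni{\alphabettwo}}(\pdeltastar) \leq \Lexpect\lceil\log|\alphabettwo|\rceil$, and you explicitly verify the numerator's nonnegativity before shrinking the denominator, a step the paper leaves implicit.
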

\begin{proof}
The proof is given in \Cref{sec:proofs}.
\end{proof}

\end{definition}
Note that the upper bound given in \cref{eq:efficiency-upper-bound} tells us how efficient the \emph{best} code could be, which is the more interesting bound for our purposes.
Additionally, we note that, by introducing a normalization factor, efficiency provides a better solution than directly comparing distributions' entropies. We illustrate this in the following example. \looseness=-1

\begin{myexample}\label{ex:entropy-vs-efficiency}
Consider a tokenization function $t_1$ with tokenization alphabet $\alphabettwo_1$ where $|\alphabettwo_1|=6$.
We then introduce a second tokenization function $t_2$ with a tokenization alphabet $\alphabettwo_2$ defined to be $\alphabettwo_1$ plus an additional $6$ tokens that occur very infrequently.
The difference between these two distributions is illustrated in \Cref{fig:inefficient_uniform_distributions}.
If, for example, we relied solely on the Shannon entropy, which is higher for more uniformly spread-out distributions, we would judge the second distribution to be better ($2.50 < 3.08$).
However, the efficiency tells the opposite story ($0.97\% > 0.86\%$).

\end{myexample}

As \Cref{ex:entropy-vs-efficiency} suggests, the measure provided by efficiency is in line with the idea of a more balanced distribution over $\alphabettwo^*$.
Informally, we do not want a tokenizer that induces a distribution with very low entropy, as this is indicative of an unbalanced distribution. 
The efficiency $\eff$ provides us with a notion of this imbalance. 
To relate efficiency back to our metaphor of the noiseless channel, we note that the quantity $1-\eff(\pdeltastar, t)$ is known as \defn{relative redundancy} and corresponds to the maximum data compression ratio (in percentage of how much can data size be reduced) that can be achieved.\looseness=-1

\section{Rényi Efficiency}\label{sec:renyi}

\Cref{def:channel_efficiency}, the standard definition of efficiency, is based on Shannon entropy.
Upon closer inspection, we see it linearly penalizes the use of long codes.
To see why, consider a case where the distribution changes such that the entropy increases by one.
Then, the upper-bound for the expected code length provided by an optimal encoder also increases by one.
However, in some cases, we may wish to assign a non-linear cost to code length, e.g., there may be a non-linearly higher cost for decoding longer codes.
In the context of choosing the vocabulary for a model, this corresponds to our desire to avoid inducing tokens that occur very infrequently because there may not be enough examples of them in the training data for the model to learn.
To add an additional degree of freedom to accommodate such preferences \citet{campbell1965coding} generalizes the measure of expected code length to \defn{discounted expected code length} for a hyperparameter $s$ as follows:\footnote{Our notation differs slightly from \citet{campbell1965coding}.}
\begin{equation}\label{eq:discounted-length}
\Lwencdelta{s} \defeq \lim_{s' \rightarrow s} \frac{ \log\left(\sum_{\delta \in \alphabettwo} \pdelta(\delta) b^{s'|\enc{\alphabettwo}(\delta)|}\right)}{s'}
\end{equation}
By L'Hôpital's rule, we can show that
\begin{equation}
    \Lwencdelta{0} = \sum_{\delta \in \alphabettwo} \pdelta(\delta) |\enc{\alphabettwo}(\delta)|
\end{equation}
and, additionally, that
\begin{equation}
    \Lwencdelta{\infty} = \max_{\delta \in \alphabettwo} |\enc{\alphabettwo}(\delta)|
\end{equation}
Beyond the limiting cases, for $s \in (-1, \infty) \setminus \{0\}$, we further note that $\Lwencdelta{s}$ is a monotonically increasing function of $s$.
The larger our value of $s$, the more disproportionately $\Lwencdelta{s}$ increases as a function of the \emph{longest} codeword, which often corresponds to the encoding of a low-frequency character in a good code because high-frequency tokens are assigned the shorter codes in order to minimize the expected code length. 
For large enough $s$, this has the effect of encouraging all codewords to be roughly of equal length.
\citet{campbell1965coding} sought an analogue of Shannon's coding theorem for $\Lwencdelta{s}$ where $s \neq 0$.
As it turns out, there is a deep connection with the Rényi entropy.\looseness=-1

\begin{figure}
\centering
\hspace{-2mm}
\includegraphics[width=0.54\linewidth]{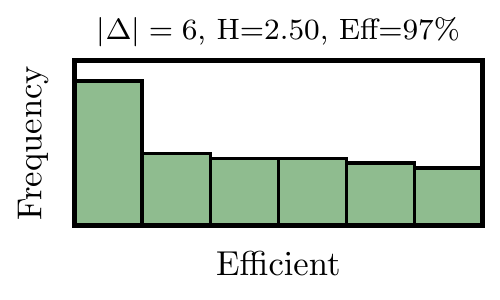}
\includegraphics[width=0.46\linewidth,trim=7.5mm 0 0 0,clip]{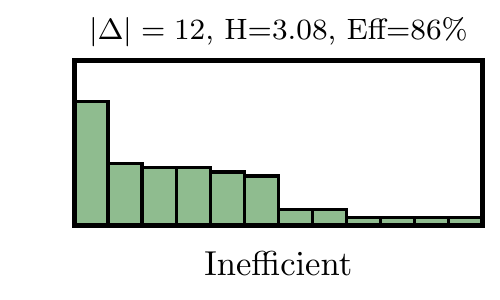}
\vspace{-3mm}
\caption{Examples of unigram distributions with efficient and inefficient channel usage.}
\label{fig:inefficient_uniform_distributions}
\end{figure}

\begin{definition}
The \defn{Rényi entropy} of order $\alpha > 0$ is defined as
\begin{equation}
\ent_\alpha(\pdelta) =\! \lim_{\alpha'\rightarrow \alpha} \frac{1}{1-\alpha'}\log \left( \sum_{\delta \in \alphabettwo} \pdelta(\delta)^{\alpha'} \right)
\end{equation}
\end{definition}
Prima facie, Rényi entropy bears some semblance to $\Lwencdelta{s}$.
To see this, consider the limiting cases.
At $\alpha = 0$, we have\looseness=-1
\begin{equation}
\ent_0(\pdelta) = \log |\alphabettwo|
\end{equation}
And, at $\alpha = \infty$, we have
\begin{equation}
\ent_\infty(\pdelta) = \max_{\delta \in \alphabettwo} -\log \pdelta(\delta)
\end{equation}
Finally, we have $\ent_1(p) = \ent(p)$, i.e., $\alpha=1$ corresponds to Shannon entropy, another result which can be shown by L'Hôpital's rule. 
These examples suggest the correspondence $\alpha = (1 + s)^{-1}$, which fits the three cases considered, e.g., note that $\alpha = 0$ when $s \rightarrow \infty$.
Moreover, this is exactly the intuition we argued for above:
When $\alpha = 0$, we encode tokens with codewords of the same length which follows from minimizing the length of the longest codeword.
On the other hand, when $s = -1$, we encourage shorter codes for high-probability tokens.
This case corresponds to $\alpha = \infty$.
We now prove that, similarly to how $\ent(\rvDelta)$ provides bounds for $\Lwenconeopt$ in \Cref{th:codebounded},
$\ent_\alpha(\rvDelta)$ provides bounds for
$ \Lwencdeltastaropt{s}$, where $\encopts{s}{\alphabettwo}$ is an encoder optimal with respect to a given $s = \alpha^{-1} + 1$.
We term such an encoder $s$-optimal.\looseness=-1

\begin{restatable}[Generalization of \citet{campbell1965coding}]{theorem}{generalizedcampbell}
\label{thm:generalized-campbell}
Let $\ent_\alpha$ be the Rényi entropy of order $\alpha$ and
let $\Lwencdelta{s}$ (\cref{eq:discounted-length}) be the discounted expected code length for the encoder $\enc{\alphabettwo}$, where $s = \alpha^{-1} - 1$.
Moreover, let $\rvDelta$ be a $\alphabettwo$-valued random variable with law $\pdelta$. 
Then for an $s$-optimal token-level encoder $\encopts{s}{\alphabettwo}$, the following bound holds on the discounted expected code length: 
\begin{equation}
    \ent_\alpha(\rvDelta) \leq \Lwencdeltaopt{s} \leq \lceil\ent_\alpha(\rvDelta)\rceil
\end{equation}
\end{restatable}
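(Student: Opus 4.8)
The plan is to mirror the proof of \Cref{thm:shannon_source_coding}: establish the lower bound for an \emph{arbitrary} prefix-free token-level encoder (so that it holds a fortiori for the $s$-optimal one $\encopts{s}{\alphabettwo}$), and establish the upper bound by exhibiting one explicit code that the $s$-optimal encoder can only improve upon. Throughout I write $\ell_\delta \defeq |\enc{\alphabettwo}(\delta)|$ and abbreviate $S \defeq \sum_{\delta \in \alphabettwo} \pdelta(\delta)^\alpha$, so that $\ent_\alpha(\rvDelta) = \frac{1}{1-\alpha}\log S$. The whole argument is driven by the single algebraic identity $\alpha s = 1 - \alpha$, which is just a restatement of the correspondence $\alpha = (1+s)^{-1}$. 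I treat a generic order $\alpha \neq 1$; the case $\alpha = 1$ ($s = 0$) is exactly \Cref{thm:shannon_source_coding}, while the boundary orders $\alpha \in \{0,\infty\}$ follow by continuity using the limiting values of $\ent_\alpha$ and $\Lwencdelta{s}$ computed above.

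For the lower bound I would begin from the factorization $\pdelta(\delta)^\alpha = \big(\pdelta(\delta)\,b^{s\ell_\delta}\big)^{\alpha}\big(b^{-\ell_\delta}\big)^{1-\alpha}$, whose exponent on $b$ vanishes precisely because $\alpha s = 1 - \alpha$. Applying Hölder's inequality with conjugate exponents $1/\alpha$ and $1/(1-\alpha)$ and then invoking the Kraft inequality $\sum_\delta b^{-\ell_\delta} \le 1$ (valid since $\enc{\alphabettwo}$ is prefix-free) collapses the second factor, giving $S \le \big(\sum_\delta \pdelta(\delta)\,b^{s\ell_\delta}\big)^{\alpha}$ when $0 < \alpha < 1$. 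Taking $\log_b$, dividing by $1-\alpha > 0$, and simplifying $\frac{\alpha}{1-\alpha} = \frac{1}{s}$ then yields $\ent_\alpha(\rvDelta) \le \Lwencdelta{s}$ for every prefix-free encoder, and hence for $\encopts{s}{\alphabettwo}$.

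The main obstacle is the regime $\alpha > 1$, i.e.\ $s \in (-1,0)$: one of the conjugate exponents becomes negative, so ordinary Hölder does not apply and I would instead use the reverse Hölder inequality, tracking the sign flips carefully. Here the Kraft factor obeys $\big(\sum_\delta b^{-\ell_\delta}\big)^{1-\alpha} \ge 1$ because $1-\alpha < 0$, and dividing through by the negative quantity $1-\alpha$ reverses the inequality back into the desired direction; this sign bookkeeping is the delicate part, and it recurs in the upper bound.

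For the upper bound I would construct an explicit code from the escort (tilted) distribution $q(\delta) \defeq \pdelta(\delta)^\alpha / S$, assigning lengths $\ell_\delta = \ceil*{-\log_b q(\delta)}$. These satisfy Kraft, since $\sum_\delta b^{-\ell_\delta} \le \sum_\delta q(\delta) = 1$, so a prefix-free code with these lengths exists and $\encopts{s}{\alphabettwo}$ is at least as good. Substituting $b^{s\ell_\delta} \le b^{s}\,q(\delta)^{-s}$ into $\Lwencdelta{s}$ and using $1 - \alpha s = \alpha$ to evaluate $\sum_\delta \pdelta(\delta)\,q(\delta)^{-s} = S^{1+s}$ reduces the discounted length to $\ent_\alpha(\rvDelta)$ plus a rounding term of at most $1$; exactly as in the passage from $\ent(\rvDelta)$ to $\ceil*{\ent(\rvDelta)}$ in \Cref{thm:shannon_source_coding}, this gives $\Lwencdeltaopt{s} \le \ceil*{\ent_\alpha(\rvDelta)}$. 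As flagged, the $s<0$ case requires reversing the length inequality, so this final rounding step must be rechecked for $\alpha > 1$.
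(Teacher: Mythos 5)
Your proof is correct and, on the lower bound, essentially coincides with the paper's: both arguments combine a H\"older-type inequality with the Kraft--McMillan inequality, and your factorization $\pdelta(\delta)^\alpha = \left(\pdelta(\delta)\,b^{s\ell_\delta}\right)^{\alpha}\left(b^{-\ell_\delta}\right)^{1-\alpha}$ is a relabeling of the paper's choice $x_i = \pdelta(\delta)^{-\frac{1}{s}}b^{-\ell(\delta)}$, $y_i = \pdelta(\delta)^{\frac{1}{s}}$ with exponents $p=-s$, $q=1-\alpha$ (the paper routes both regimes through reverse H\"older, whereas your use of ordinary H\"older with conjugate exponents $1/\alpha$ and $1/(1-\alpha)$ when $0<\alpha<1$ is a small simplification). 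Where you genuinely diverge is the upper bound. The paper extracts the equality condition $\ell(\delta) = -\alpha\log_b\pdelta(\delta) + \log_b\left(\sum_{\delta'\in\alphabettwo}\pdelta(\delta')^\alpha\right)$ from the H\"older step and then runs a block-coding argument: it considers i.i.d.\ sequences of $M$ tokens, shows the optimal integer-length code on blocks satisfies $\ent_\alpha(\rvDelta) \le \Lwencdeltaopt{s} < \ent_\alpha(\rvDelta) + \frac{1}{M}$ per token, and only at the very end specializes back to integer token-level lengths to claim the ceiling. You instead construct the code directly at the token level from the escort distribution $q(\delta) \propto \pdelta(\delta)^\alpha$ with lengths $\ceil*{-\log_b q(\delta)}$, verify Kraft, and compute the discounted length to be $< \ent_\alpha(\rvDelta) + 1$; this is Campbell's original route, it is shorter, and it never leaves the token level, which is what the theorem is actually about. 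What the paper's block argument buys in exchange is the stronger asymptotic fact that block codes can approach $\ent_\alpha(\rvDelta)$ arbitrarily closely, which the paper uses for intuition but does not need for the stated bound. Your sign bookkeeping for $s\in(-1,0)$ is also sound: for $s<0$ an upper bound on $\Lwencdelta{s}$ requires a \emph{lower} bound on $\sum_{\delta\in\alphabettwo}\pdelta(\delta)\,b^{s\ell_\delta}$, which is exactly what the reversed length inequality delivers, mirroring the inequality flips in the paper's own derivation. One shared caveat: the final tightening from ``$<\ent_\alpha(\rvDelta)+1$'' (resp.\ ``$<\ent_\alpha(\rvDelta)+\frac{1}{M}$'') to ``$\le\ceil*{\ent_\alpha(\rvDelta)}$'' is asserted rather than derived both in your write-up (by appeal to the statement of \cref{thm:shannon_source_coding}) and in the paper (by a one-sentence remark), so you are no less rigorous than the source on that step.
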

\begin{proof}
Proof in \Cref{sec:proofs}.
\end{proof}
Note that we have further generalized \citeposs{campbell1965coding} result by allowing some negative values for $s$, namely, $s>-1$.
As a result, we can induce additional non-linear weight on too \emph{short} codes as opposed to only \emph{long} codes.

Now we generalize the efficiency with respect to Shannon entropy to Rényi entropy.
Let $\encopts{s}{\alphabettwo}$ be an $s$-optimal token-level encoder over token alphabet $\alphabettwo$. %
Note that several terms from our prior notation can now be expressed in terms of $\encopts{s}{\alphabettwo}$, i.e.,
$\encopt{\alphabettwo} = \encopts{0}{\alphabettwo}$ and $\encuni{\alphabettwo} = \encopts{\infty}{\alphabettwo}$.
\begin{restatable}{theorem}{penalizedcodbounded}
\label{thm:penalizedcodbounded}
Let $\alpha = (1 + s)^{-1}$ and $\pdeltastar$ be a distribution over $\alphabettwo^*$, and let $\pdelta$ be the unigram distribution induced by $\pdeltastar$ (\cref{def:unigram_distribution}).
Then, the following inequality holds
\begin{align}
\lceil \ent_\alpha(\rvDelta) \rceil &\geq \frac{\Lwenctwo{s} - \mathrm{Cov}(\avgencodinglenopts{s},\strlen)}{\Lexpect} 
\end{align}
for 
an $s$-optimal sequence-level encoder $\encopts{s}{\alphabettwo}$ based on token-level encoder $\enc{\alphabettwo} : \alphabettwo \rightarrow \{1,\ldots, b\}^*$
.\looseness=-1

\end{restatable}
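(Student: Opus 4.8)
The plan is to follow the template of \Cref{th:codebounded}, replacing Shannon's source coding theorem with the generalized Campbell bound of \Cref{thm:generalized-campbell} at the final step. First I would exploit the fact that the sequence-level encoder is obtained from the token-level encoder by concatenation, so the ordinary code length of a whole sequence factors as $|\encopts{s}{\alphabettwo}(\bdelta)| = \sum_{\delta}\countc(\delta,\bdelta)\,|\encopts{s}{\alphabettwo}(\delta)| = \strlen(\bdelta)\,\avgencodinglenopts{s}(\bdelta)$, where $\avgencodinglenopts{s}(\bdelta) = \sum_\delta X_{\delta}(\bdelta)\,|\encopts{s}{\alphabettwo}(\delta)|$ is the average per-token length. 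Taking expectations over $\bdelta\sim\pdeltastar$ then gives $\Lwenctwo{s} = \expect[\avgencodinglenopts{s}\,\strlen]$.

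Next I would apply the covariance identity $\expect[AB]=\expect[A]\expect[B]+\mathrm{Cov}(A,B)$ with $A=\avgencodinglenopts{s}$ and $B=\strlen$, obtaining $\Lwenctwo{s} = \expect[\avgencodinglenopts{s}]\,\Lexpect + \mathrm{Cov}(\avgencodinglenopts{s},\strlen)$, and hence
\[
\frac{\Lwenctwo{s}-\mathrm{Cov}(\avgencodinglenopts{s},\strlen)}{\Lexpect} = \expect[\avgencodinglenopts{s}].
\]
By linearity of expectation together with the defining exchange $\expect_{\bdelta\sim\pdeltastar}[X_{\delta}(\bdelta)]=\pdelta(\delta)$ of the unigram distribution (\cref{def:unigram_distribution}), the right-hand side collapses to the ordinary token-level expected length on the unigram distribution, $\expect[\avgencodinglenopts{s}] = \sum_{\delta}\pdelta(\delta)\,|\encopts{s}{\alphabettwo}(\delta)| = \mathcal{L}^{(0)}_{\encopts{s}{\alphabettwo}}(\pdelta)$. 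This reduction is identical to the one in \Cref{th:codebounded}; the only new ingredient is how this token-level quantity is bounded.

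It then remains to show $\mathcal{L}^{(0)}_{\encopts{s}{\alphabettwo}}(\pdelta)\le\lceil\ent_\alpha(\rvDelta)\rceil$, for which I would route through the discounted length $\Lwencdeltaopt{s}$ of the very same $s$-optimal code. For a fixed encoder, the map $s'\mapsto\mathcal{L}^{(s')}_{\encopts{s}{\alphabettwo}}(\pdelta)$ (\cref{eq:discounted-length}) is a quasi-arithmetic mean of the codeword lengths that is monotonically increasing in the discount parameter $s'$, as follows from Jensen's inequality applied to the convex map $x\mapsto b^{s'x}$. Consequently $\mathcal{L}^{(0)}_{\encopts{s}{\alphabettwo}}(\pdelta)\le\Lwencdeltaopt{s}$ whenever $s\ge 0$, and chaining this with \Cref{thm:generalized-campbell}, which supplies $\Lwencdeltaopt{s}\le\lceil\ent_\alpha(\rvDelta)\rceil$, closes the argument in that regime.

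The main obstacle is precisely the sign of $s$. The Jensen/monotonicity step points in the favorable direction only for $s\ge 0$ (equivalently $\alpha\le 1$); for $s\in(-1,0)$—which includes the experimentally interesting $\alpha=2.5$, where $s=\alpha^{-1}-1=-0.6$—the inequality $\mathcal{L}^{(0)}_{\encopts{s}{\alphabettwo}}(\pdelta)\le\Lwencdeltaopt{s}$ reverses, so the bound cannot be obtained by merely relaying through Campbell's theorem. I would therefore expect the delicate part of the proof to be the $s<0$ regime: one either reads the statement as holding for $s\ge0$, or replaces the surrogate step with a direct estimate of $\mathcal{L}^{(0)}_{\encopts{s}{\alphabettwo}}(\pdelta)$ in terms of $\ent_\alpha(\rvDelta)$, e.g.\ by inserting the explicit escort-distribution form of the $s$-optimal lengths, $|\encopts{s}{\alphabettwo}(\delta)|\approx-\log_b\frac{\pdelta(\delta)^{\alpha}}{\sum_{\delta'}\pdelta(\delta')^{\alpha}}$, and bounding the resulting expectation against $\lceil\ent_\alpha(\rvDelta)\rceil$ rather than against its discounted counterpart.
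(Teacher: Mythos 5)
Your proposal is, step for step, the paper's own proof: your inline covariance computation is exactly \Cref{lemma:cov} (which the paper invokes rather than re-derives), and your reduction of the theorem to the token-level claim $\mathcal{L}^{(0)}_{\encopts{s}{\alphabettwo}}(\pdelta)\le\lceil\ent_\alpha(\rvDelta)\rceil$, discharged by chaining a Jensen-type comparison of $\mathcal{L}^{(0)}$ with $\mathcal{L}^{(s)}$ and then \Cref{thm:generalized-campbell}, is precisely the route the paper takes.

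The obstacle you flag at $s<0$ is genuine, and it afflicts the paper's own proof as well: in the paper's Jensen step, the inequality $\log_b\bigl(\sum_{\delta\in\alphabettwo}\pdelta(\delta)\,b^{s|\encopts{s}{\alphabettwo}(\delta)|}\bigr)\ge s\sum_{\delta\in\alphabettwo}\pdelta(\delta)\,|\encopts{s}{\alphabettwo}(\delta)|$ is divided by $s$ without reversing its direction, which is legitimate only for $s>0$. Moreover, the repair you sketch for $s\in(-1,0)$ --- inserting the escort-distribution form of the $s$-optimal lengths and bounding the plain expectation directly --- cannot succeed, because that very computation shows the claimed bound is false in this regime. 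Plugging $|\encopts{s}{\alphabettwo}(\delta)|\approx-\alpha\log_b\pdelta(\delta)+\log_b\sum_{\delta'\in\alphabettwo}\pdelta(\delta')^{\alpha}$ into the plain expectation gives
\begin{equation*}
\mathcal{L}^{(0)}_{\encopts{s}{\alphabettwo}}(\pdelta)\;\approx\;\alpha\,\ent(\rvDelta)+(1-\alpha)\,\ent_\alpha(\rvDelta)\;=\;\ent_\alpha(\rvDelta)+\alpha\left(\ent(\rvDelta)-\ent_\alpha(\rvDelta)\right),
\end{equation*}
and for $\alpha>1$ the excess $\alpha\left(\ent(\rvDelta)-\ent_\alpha(\rvDelta)\right)$ is unbounded: with $\pdelta$ placing mass $1/2$ on one token and $1/(2N)$ on each of $N$ others, and $\alpha=2$, one gets $\mathcal{L}^{(0)}_{\encopts{s}{\alphabettwo}}(\pdelta)\approx\log_2 N$ while $\lceil\ent_2(\rvDelta)\rceil\le 2$; the integer-length constraint perturbs this by at most one. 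So your cautious reading is the correct one: the covariance reduction plus Campbell establishes the theorem only for $s\ge 0$ (equivalently $\alpha\le 1$), and in the $\alpha>1$ regime used in the experiments (e.g., $\alpha=2.5$, $s=-0.6$) the normalized quantity $\left(\Lwenctwo{s}-\mathrm{Cov}(\avgencodinglenopts{s},\strlen)\right)/\Lexpect$ can exceed $\lceil\ent_\alpha(\rvDelta)\rceil$ by an arbitrarily large margin.
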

\begin{proof}
Proof in \Cref{sec:proofs}.
\end{proof}
\begin{definition}\label{def:eff_alpha}
Let $\psigmastar$ be a distribution over $\alphabetone^*$, let $t : \alphabetone^* \rightarrow \alphabettwo^*$ be a tokenization function, and let $\pdeltastar$ be the distribution over $\alphabettwo^*$ induced by $t$.
The \defn{Rényi efficiency} of $t$ at $\alpha$ is defined as
\begin{subequations}
\begin{align}
    \eff_\alpha(\psigmastar, t) &\defeq \frac{\Lwenctwo{s}}{\Lwenctwo{\infty}} \\
    &= \frac{\Lwenctwo{s}}{\mathcal{L}_{\encuni{\alphabettwo}}(\pdeltastar)}
\end{align}
\end{subequations}
where $s=\alpha^{-1} - 1$.
\end{definition}
The R{\'e}nyi efficiency can be easily upper-bounded in a similar manner to the Shannon efficiency.
\begin{restatable}{theorem}{renyiefflower}
\label{thm:renyi_eff_lower}
Let $\psigmastar$ be a distribution over $\alphabetone^*$, let $t : \alphabetone^* \rightarrow \alphabettwo^*$ be a tokenization function, and let $\pdeltastar$ be the distribution over $\alphabettwo^*$ induced by $t$.
Then, for an $s$-optimal token-level encoder $\encopts{s}{\alphabettwo}$ lifted to the sequence-level,
the Rényi efficiency of $t$ at $\alpha$ is upper-bounded by\looseness=-1
\begin{equation}
\frac{\lceil \ent_\alpha(\rvDelta) \rceil +\frac{\mathrm{Cov}\left( \avgencodinglenopts{s},\, \strlen \right)}{\Lexpect}}{ \log |\alphabettwo| } \geq \eff_\alpha(\psigmastar, t) 
\end{equation}
where $\rvDelta$ is a $\alphabettwo$-valued random variable with law $\pdelta$, the unigram distribution induced by $t$.
\end{restatable}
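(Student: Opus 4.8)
The plan is to mirror the proof of the Shannon-entropy efficiency bound (\Cref{thm:efficiencybound}), substituting the penalized code-length bound just established in \Cref{thm:penalizedcodbounded} for the Shannon one. Starting from the definition of Rényi efficiency, $\eff_\alpha(\psigmastar, t) = \Lwenctwo{s}/\mathcal{L}_{\encuni{\alphabettwo}}(\pdeltastar)$, I would treat the numerator and denominator separately and then combine them into a single ratio.

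First I would compute the denominator in closed form. Because the uniform encoder $\encuni{\alphabettwo}$ assigns every token a codeword of fixed length $\lceil \log |\alphabettwo| \rceil$, its sequence-level lift maps a token sequence $\bdelta$ to a string of length $|\bdelta|\,\lceil \log |\alphabettwo| \rceil$. Taking the expectation over $\pdeltastar$ and pulling the constant factor out of the sum yields $\mathcal{L}_{\encuni{\alphabettwo}}(\pdeltastar) = \lceil \log |\alphabettwo| \rceil \cdot \Lexpect$. This factorization is the one place where the sequence-level definition of $\mathcal{L}$ is used in a nontrivial way, so I would state it carefully.

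Next I would bound the numerator using \Cref{thm:penalizedcodbounded}, which gives $\lceil \ent_\alpha(\rvDelta) \rceil \geq (\Lwenctwo{s} - \mathrm{Cov}(\avgencodinglenopts{s},\strlen))/\Lexpect$. Multiplying through by $\Lexpect > 0$ and rearranging, this becomes $\Lwenctwo{s} \leq \Lexpect\,\lceil \ent_\alpha(\rvDelta) \rceil + \mathrm{Cov}(\avgencodinglenopts{s},\strlen)$. Dividing by the denominator computed above then gives $\eff_\alpha(\psigmastar, t) \leq \big(\lceil \ent_\alpha(\rvDelta) \rceil + \mathrm{Cov}(\avgencodinglenopts{s},\strlen)/\Lexpect\big)\big/\lceil \log |\alphabettwo| \rceil$, which is the stated bound except that the denominator carries a ceiling.

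The final step is to relax $\lceil \log |\alphabettwo| \rceil$ down to $\log |\alphabettwo|$ so as to match the theorem's form, and this relaxation is the step I would be most careful about, since it is only valid once the sign of the numerator is known. Here I would observe that $\eff_\alpha$ is a ratio of nonnegative expected code lengths and hence nonnegative; combined with the displayed upper bound this forces the numerator $N = \lceil \ent_\alpha(\rvDelta) \rceil + \mathrm{Cov}(\avgencodinglenopts{s},\strlen)/\Lexpect$ to satisfy $N \geq \eff_\alpha \cdot \lceil \log |\alphabettwo| \rceil \geq 0$. With $N \geq 0$ and $\lceil \log |\alphabettwo| \rceil \geq \log |\alphabettwo| > 0$ (for $|\alphabettwo| \geq 2$), we get $N/\lceil \log |\alphabettwo| \rceil \leq N/\log |\alphabettwo|$, which chains onto the previous inequality to complete the proof. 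The main pitfall to avoid is neglecting the possibly negative covariance term and silently assuming $N \geq 0$; pinning this down via the nonnegativity of $\eff_\alpha$ is what makes the ceiling relaxation rigorous.
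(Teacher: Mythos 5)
Your proof is correct and takes essentially the same route as the paper's: upper-bound the numerator $\Lwenctwo{s}$ via \Cref{thm:penalizedcodbounded} and use the fact that the uniform-encoder denominator equals $\Lexpect\cdot\lceil\log|\alphabettwo|\rceil \geq \Lexpect\cdot\log|\alphabettwo|$. The only difference is in ordering—the paper lower-bounds the denominator by $\Lexpect\log|\alphabettwo|$ before dividing, which silently requires the nonnegativity of the numerator that you verify explicitly—so your care about the sign condition is a minor but genuine refinement of the paper's argument rather than a different approach.
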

\begin{proof}
Proof in \Cref{sec:proofs}.
\end{proof}
To provide more intuition of why the non-linear penalization in Rényi efficiency makes for a good measure of distribution balance, we offer a worked example in \Cref{ex:peak_no_peak}.\looseness=-1

\begin{figure*}
\centering
\includegraphics[width=0.48\linewidth]{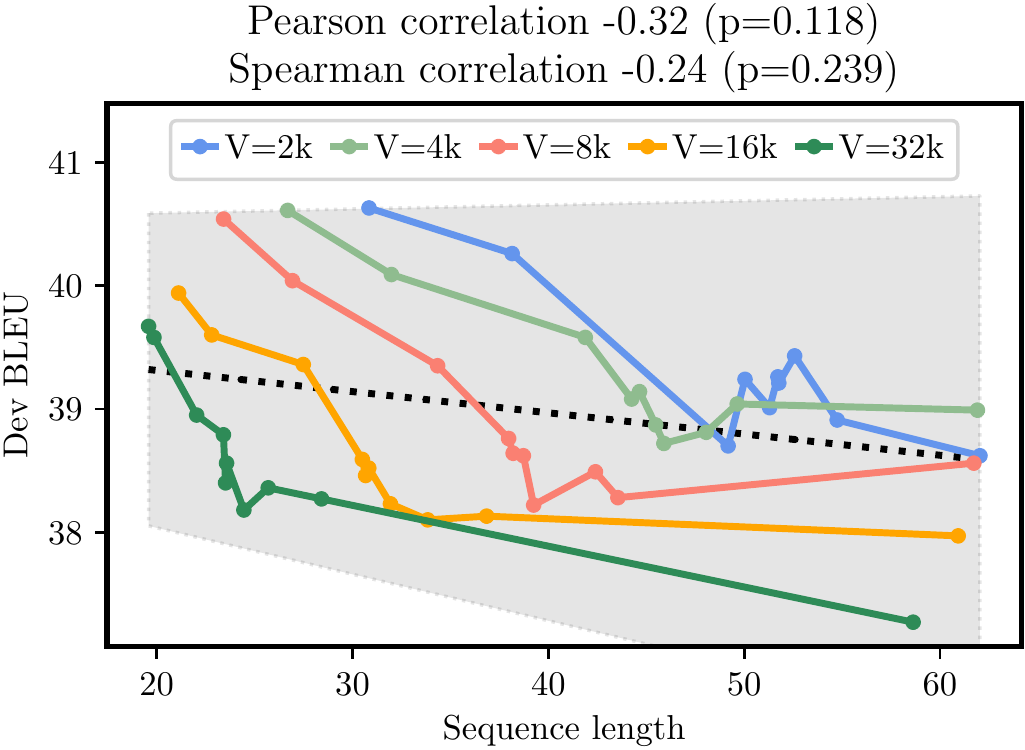}
\hspace{1mm}
\includegraphics[width=0.48\linewidth]{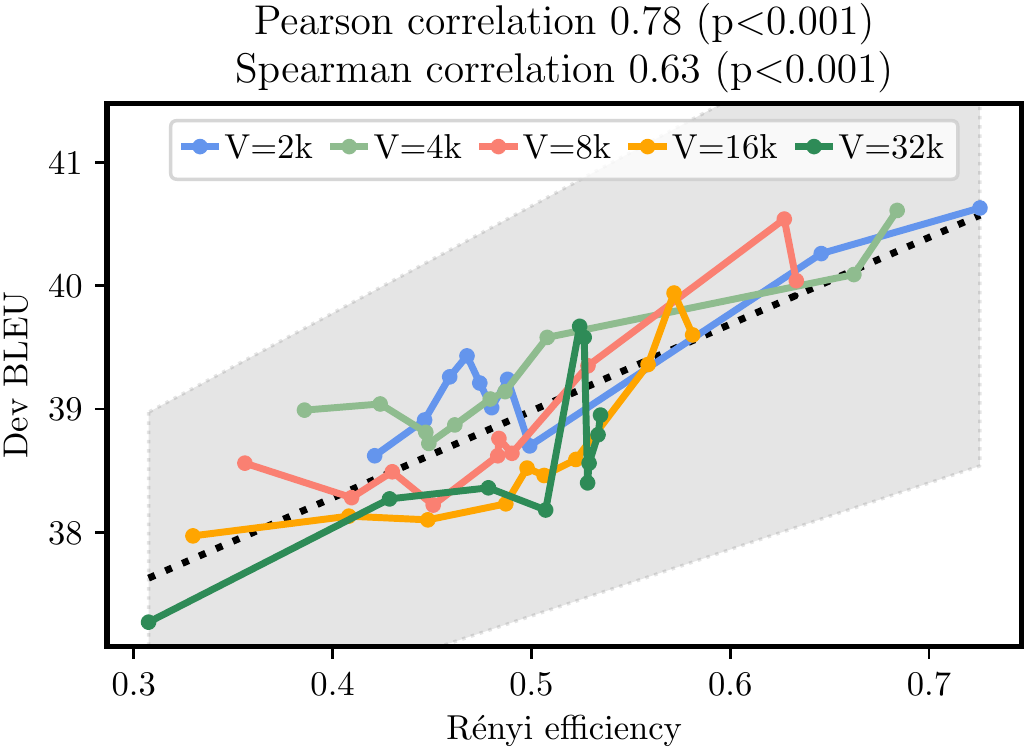}
\vspace{-3mm}
\caption{Efficiency of sequence length and $\ent_{2.5}/\ent_0$ as predictors of MT performance (average of 5 runs). Bands show 95\% $t$-test confidence intervals for regression lines.}
\label{fig:corr_renyi_eff}
\label{fig:corr_seq_len}
\end{figure*}
\section{The Compression Principle}
\label{sec:compression_principle}

In previous sections, we discussed how different tokenizers lead to token distributions of varying properties.
Now, we add the last piece necessary to link the downstream performance of a system with the choice of a tokenizer.

\begin{hypothesis}[Compression Principle]
\label{hyp:compression_principle}

Let $\psigmastar$ be a distribution over texts with characters from alphabet $\alphabetone$ and $t$ be a tokenization function from $\alphabetone^*$ to $\alphabettwo^*$ for some token alphabet $\alphabettwo$.
Let $\pdelta$ be the $\alphabettwo$-unigram distribution induced by $t$.
Finally, let $\performance(t)$ be some measure of performance of a system $M$ which uses tokenization $t$.
Then, for some $\alpha$ dependent on $M$, $\eff_\alpha(\psigmastar, t)$ is a good predictor of $\performance(t)$.\looseness=-1
\end{hypothesis}
In words, we hypothesize that the efficiency of the tokenization function $t$ is highly correlated with the downstream performance.
We will verify this claim experimentally in \Cref{sec:empirical_evidence}.\looseness=-1

\paragraph{Rényi Entropy $\alpha$.}
The choice of $\alpha$ for $\ent_\alpha$ determines the extent to which longer codewords are penalized.
On one hand, if we observe that Rényi efficiency with low $\alpha$ correlates the best with performance, we can conclude that longer codewords and, hence, very low-frequency tokens hurt performance.
On the other hand, if we observe that Rényi efficiency with high $\alpha$, we can conclude that shorter codewords and, hence, very high-frequency tokens hurt performance. 
Note that most downstream NLP applications do not explicitly use codewords (all token representations are the same size), but long codewords are still a natural way to think about low- and high-frequency tokens.\looseness=-1

\paragraph{Learnability.}

The most intuitive explanation for why some tokenization functions enable good downstream results and some worse is that having many low-frequent tokens will prevent the model from learning their distributional properties.
This hypothesis can be related back to the sample complexity of the learning algorithm, i.e., the number of training samples needed by the model in the given setting to learn the function of interest.
If we accept that part of the MT task is learning the meaning of all individual vocabulary tokens, then sample complexity could (at least partially) be expressed in terms of the number of instances of each token.
This argument is made by \citet{gowda2020finding}, who are concerned with what proportion of $\delta\in\alphabettwo$ appears at least 100 times in the corpus for the downstream task at hand. %

Nevertheless, we will see shortly that the best predictor with Rényi efficiency is
for
$\alpha > 1$, meaning that higher weight is given to codewords for more frequent tokens.
We therefore hypothesize, that very high-frequency tokens have the most impact in downstream performance.

\section{Experiments}
\label{sec:empirical_evidence}
We now seek empirical evidence for \Cref{hyp:compression_principle}.
We focus on MT, where a standard automatic evaluation metric is \bleu \citep{papineni2002bleu}.  We use the English$\rightarrow$German CommonCrawl dataset in all experiments. The specifics of the MT system, data and evaluation are described in \Cref{sec:mt_details}.
We consider two different experimental manipulations.
First, we experiment with various modifications of the popular byte-pair encoding (BPE) tokenizer \citep{sennrich2016} to control its compression rate.
The details are discussed in \Cref{sec:experiment_1}.
Second, we experiment with a variety of tokenization schemes: Unigram \citep{kudo2018subword}, WordPiece \citep{devlin2019}, Lempel--Ziv--Welch \citep{ziv1977universal,welch1984} and Morfessor \citep{creutz2007unsupervised,virpioja2013morfessor,smit2014morfessor}.
The details are discussed in \Cref{sec:experiment_2}. \looseness=-1

Note that throughout our experiments, we make the simplifying assumption of $\mathrm{Cov}\left( \avgencodinglenopts{s}, \strlen \right) = 0$.
It simplifies the upper bound of $\eff(\psigmastar, t)$ (from \Cref{thm:efficiencybound}) to $\frac{\lceil \ent(\rvDelta)\rceil}{\log |\alphabettwo|}$ and the upper bound of $\eff_\alpha(\psigmastar, t)$ (from \Cref{thm:renyi_eff_lower}) to $\frac{\lceil \ent_\alpha(\rvDelta) \rceil}{\log |\alphabettwo|}$.
From our preliminary results, $\mathrm{Cov}\left( \avgencodinglenopts{s}, \strlen \right)$
is negative and small.
We leave its more accurate approximation, which requires a Rényi analogue of Huffman coding as in \citet{jelinek-renyi-coding}, to future work.\looseness=-1

\subsection{Experiment 1}\label{sec:experiment_1}

In our first experiment, we analyze how predictive various quantitative attributes of a tokenization scheme are of downstream model performance.
We consider BPE with 5 different vocabulary sizes: 2k, 4k, 8k, 16k, and 32k.
For each vocabulary size, we create multiple tokenization schemes with varying compression rates.
As discussed in \Cref{subsec:bpe}, BPE produces a vocabulary through a greedy compression algorithm.
However, in order to achieve a variety of different compression rates, we inject random noise into the algorithm.\footnote{The empirical effects of choosing  BPE merge operations uniformly at random is studied by \citet{saleva2023what}.} 
We achieve this by sampling from a Boltzmann distribution over the pair frequencies with temperature parameter $\tau$; see \Cref{subsec:bpe} for details.\footnote{Note that the stochasticity in our case is introduced during training and not during inference, as in the popular BPE-dropout method \citep{provilkov2020bpe}.} We then treat each vocabulary size--temperature pair as a single data point in our analysis.\looseness=-1

Our main quantitative attribute of interest, i.e., predictor, is Rényi efficiency. 
Aside from Rényi efficiency, we further consider Shannon and Rényi entropies, Shannon efficiency, and average tokenized sequence length.
Further, one popular heuristics for choosing the vocabulary size is given and justified by \citet{gowda2020finding}.
It can be summarized as: ``\textit{Use the highest possible vocabulary size such that 95\% of [tokens] occur at least 100 times in the data.}''
While the constants seem arbitrary, this rule of thumb works well in practice \citep{gowda2022checks,dramko2022dire,kumar2022bpe}.
Nevertheless, it is stated in an algorithmic manner and not as a predictor of performance or learnability.
We attempt to turn it into a regressive predictor so as to make it more comparable with the other quantities studied.
Given $\pdelta$, let $f_n(\pdelta)$ symbolize the frequency of the $n^{\text{th}}$ percentile.
We then define the quantity
$F_{\gamma_1,\gamma_2}(\pdelta) = \sum_{\gamma_1 \leq n \leq \gamma_2 } f_n(\pdelta)$, which in words, is the sum of token frequencies from the $\gamma_1^\text{th}$ to $\gamma_2^\text{th}$ percentile.
The original work suggests examining the frequency of the 95$^\text{th}$ percentile, i.e., $\gamma_1 = \gamma_2 = 0.95$.
In contrast, we add an additional degree of freedom as we do not inspect a single percentile frequency but rather a sum across an interval. 
Later, we scan the whole space for $\gamma_1$ and $\gamma_2$ and show that there are better choices that lead to much higher correlations. 

\begin{table}
\centering
\resizebox{\linewidth}{!}{
\addtolength{\tabcolsep}{-0.5mm}
\begin{tabular}{l<{\hspace{-2mm}}rrr}
\toprule
\textbf{Predictor} &
\textbf{Pearson}\hspace{3mm} & \textbf{Spearman}\hspace{1mm} & ${\bm{\rho^2}}$\\
\midrule
Sequence len. &
$-0.32 \,{\scriptstyle(=0.118)}$ & $-0.24 \,{\scriptstyle(=0.239)}$ & 10\% \\
Percentile freq. &
\colorbox{\algcolorC}{$0.76$}\hspace{-1mm} ${\scriptstyle(<0.001)}$ & ${0.63} \,{\scriptstyle(<0.001)}$ & 58\% \\
Entropy &
$0.22 \,{\scriptstyle(=0.281)}$ & $0.12 \,{\scriptstyle(=0.578)}$ & 5\% \\
Entropy eff. &
$0.56 \,{\scriptstyle(=0.004)}$ & $0.38 \,{\scriptstyle(=0.006)}$ & 31\%\\
Rényi entropy &
$0.49 \,{\scriptstyle(=0.001)}$ & $0.38 \,{\scriptstyle(=0.006)}$ & 24\%\\
Rényi eff. & 
\colorbox{\algcolorB}{$0.78$}\hspace{-1mm} ${\scriptstyle(<0.001)}$ & ${0.66} \,{\scriptstyle(<0.001)}$ & 61\% \\
\bottomrule
\end{tabular}
}
\caption{Correlations between different predictors and MT performance (\bleu). 
The $p$-values for each statistic (computed using a $t$-test) are in parentheses.}\label{tab:comparison_corr_all}
\vspace{-.5cm}
\end{table}

We use Pearson and Spearman correlations with downstream model performance (measured with \bleu) as our metrics of predictor quality. 
Recall that Pearson correlation tells us the strength of a \emph{linear} relationship between two variables.
On the other hand, Spearman correlation quantifies the strength of a linear relationship of the \emph{ranking}.\looseness=-1  

\begin{figure}
\centering
\includegraphics[width=\linewidth]{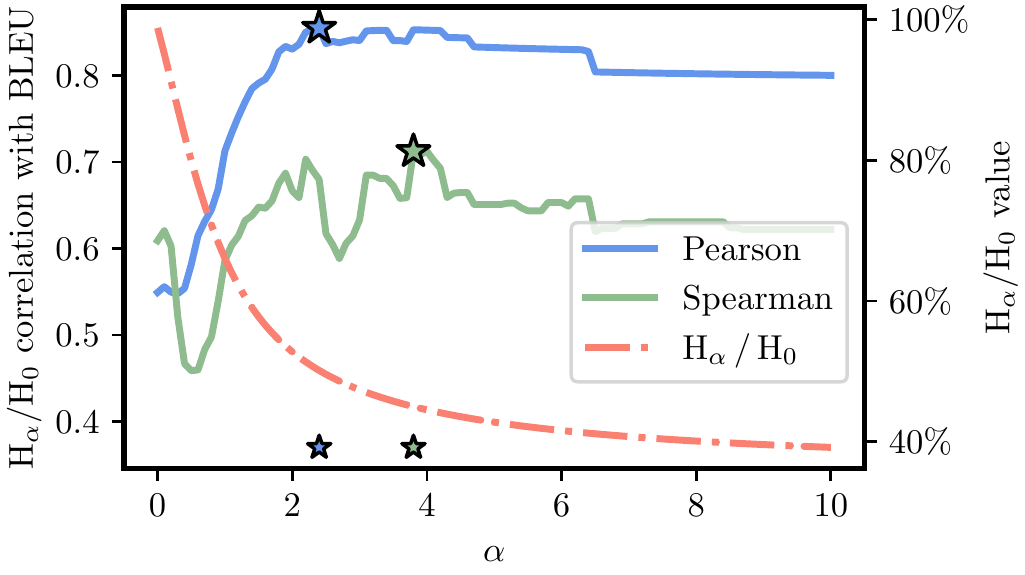}
\vspace{-9mm}
\caption{Correlation of Rényi efficieny ($\ent_\alpha/\ent_0$) with \bleu on train data in the experiment 1. Maximums of Pearson and Spearman correlations are marked with $\star$.}
\label{fig:corr_renyi_alpha}
\vspace{-.5cm}
\end{figure}

\paragraph{Results.}  In order to select $\alpha$ (for $\eff_\alpha$) as well as $\gamma_1$ and $\gamma_2$ ( for $F_{\gamma_1,\gamma_2}$), we use half of the data to perform a grid-search, selecting the hyperparameters that lead to the highest Pearson correlation.
We show the results of this grid search for $\ent_\alpha/\ent_0$ in \Cref{fig:corr_renyi_alpha} ($\alpha^* \doteq 2.5$) and for $F_{\gamma_1,\gamma_2}$ in \Cref{fig:freq_alphas_grid} ($\gamma_1^* \doteq 0.03, \gamma_2^* \doteq 0.83$).
Unless otherwise stated, we use these values in subsequent experiments.
We show the relationship between \bleu, sequence length and  Rényi efficiency as approximated by the lower bound (\Cref{thm:renyi_eff_lower}) in \Cref{fig:corr_renyi_eff}. A comprehensive comparison for all predictors is shown in \Cref{tab:comparison_corr_all}.
The visualization of the other predictors is in \Cref{fig:other_predictor_visualization}.
From these analyses, we can see that the Rényi efficiency provides a significantly better explanation for downstream model performance than any of our other predictors. 

When examining  which $\alpha$ leads to the highest absolute correlation with \bleu, we can conclude that tokenization schemes that result in fewer very high-frequency tokens are the best for downstream performance. 
This is evinced by both the relatively high value of $\alpha$ that leads to the best correlation with performance (\Cref{fig:corr_renyi_alpha}, $\alpha^* \doteq 2.5$) and by \Cref{fig:freq_alphas_grid}, which shows that frequencies in the top percentile correlate \emph{negatively} with performance. 
Importantly, this finding does not contradict \citeposs{gowda2020finding} rule of thumb, which focuses on \emph{low} frequency tokens. 
While very high and very low frequencies produced by a tokenization scheme are not independent,
a tokenization scheme may feasibly produce both, neither or only one.

Furthermore, the Pearson correlation between the efficiency ($\ent_{2.5}/\ent_0$) and percentile frequency ($F_{0.03,0.83}$) is $0.96$, which suggests that both predictors are capturing the same underlying effect.\looseness=-1

\subsection{Experiment 2}
\label{sec:experiment_2}

\begin{figure}
\centering
\includegraphics[width=\linewidth]{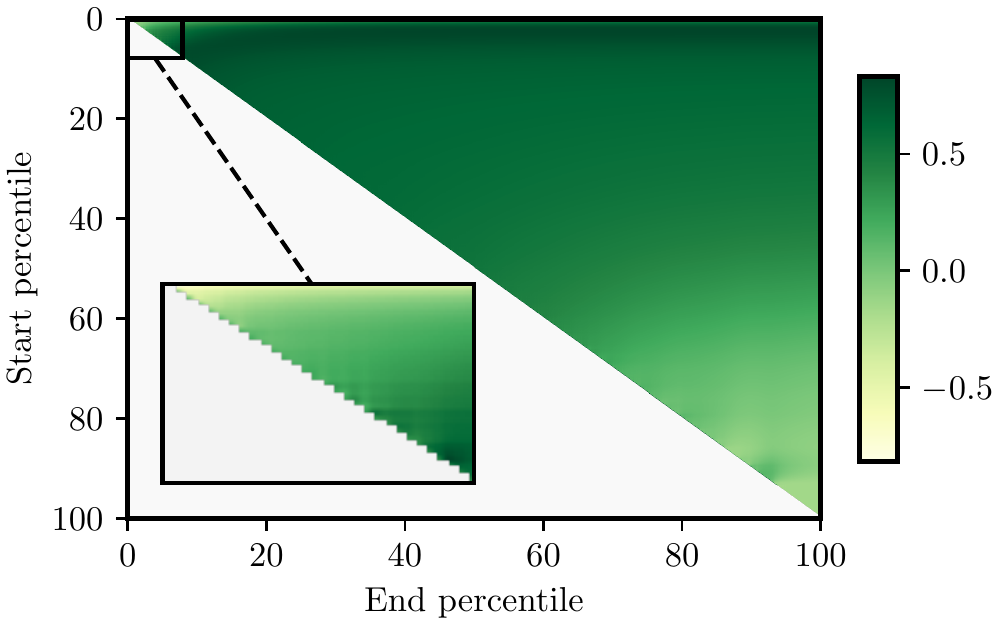}
\caption{Results for grid search over the best hyperparameters for percentile frequency predictor to maximize the absolute Pearson correlation.
The highest is for $3^\text{rd}$ to $83^\text{th}$ percentile with $\rho=0.81$.
}
\label{fig:freq_alphas_grid}
\vspace{-.6cm}
\end{figure}

In this experiment, we evaluate whether there exist aspects of a tokenization scheme that influence \bleu \emph{beyond} the Rényi efficiency. 
Following results in Experiment 1, we focus on Rényi efficiency at $\alpha=2.5$. 
In contrast to the first experiment, we consider different tokenization schemes (BPE, Unigram, WordPiece, LZW, Morfessor).
We manipulate their efficiency by lowering the amount of tokenizer training data (2k, 8k, 100k parallel lines) together with varying vocabulary sizes of 4k, 8k, and 16k tokens.
We then treat each tokenization-scheme--training-data-size--vocabulary-size triple as a single data point in this analysis.
We compare three different linear models \citep{gelman_hill_2006}, where \bleu is always the dependent variable: (i) with the tokenization scheme as a random effect, (ii) with  Rényi efficiency as a fixed effect, and (iii) with both. 
Importantly, we treat tokenization scheme as a random effect because the set of tokenization algorithms that we consider does not encompass all possible methods, i.e., only a sample of all possible algorithms are observed.\looseness=-1

To compare the ability of these different models to predict \bleu, we look at the average change in log-likelihood of held-out data points under a given model with respect to a baseline model: A model trained with only an intercept term. A larger value of $\Delta$ log-likelihood indicates that the data point is more probable under the comparison model, i.e., the comparison model more closely fits the observed data. We use 10-fold cross-validation to estimate these differences: Our data is split randomly into 10 folds, where 9 of the folds are used to learn model coefficients and the 10$^\text{th}$ fold is held back for evaluation. The same process is performed until we have a $\Delta$ log-likelihood value for each data point.\looseness=-1

\paragraph{Results.} In \cref{fig:dll_renyi_tokenizer}, we see that Rényi efficiency is a stronger predictor of MT performance than the tokenization scheme alone. Interestingly though, the predictive power of these two predictors seems to be orthogonal, as evinced by the mean $\Delta$ log-likelihood of a model with both predictors. This finding suggests that there are additional qualities of a good tokenization scheme that Rényi efficiency alone cannot capture. We leave the investigation of such qualities to future work.

\begin{figure}
\centering
\includegraphics[width=\linewidth]{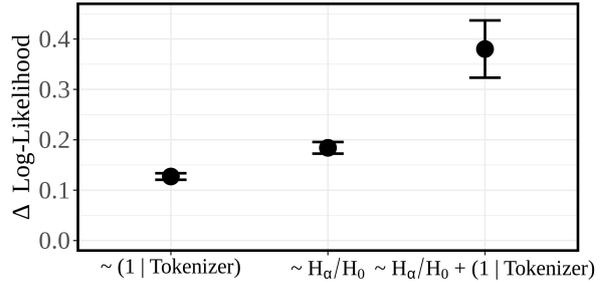}
\caption{Mean change in log-likelihood on held-out data under linear models using different predictors. Bars indicate 95\% confidence interval around the mean.
}
\label{fig:dll_renyi_tokenizer}
\vspace{-.6cm}
\end{figure}

\section{Conclusion}\label{sec:conclusion}
Our paper presents a new information-theoretic approach to characterizing a good tokenization scheme.
We contend that the Rényi efficiency of the unigram distribution that a tokenization scheme produces is a principled measure of the tokenization quality.
To test this claim, we evaluate a large set of tokenization schemes, with varying  vocabulary sizes and produced by different tokenization schemes. 
We observe how the Rényi efficiency of these different tokenizations relates to the performance of a downstream MT model. We find that, for an appropriate choice of the parameter $\alpha$, this new metric has a very strong Pearson correlation with \bleu: $0.78$ in comparison to just $-0.32$ for baseline sequence length. 
From a theoretical perspective, this property can be connected to a penalization of token distributions that are too unbalanced, having, in particular, very high-frequency tokens.
This finding is in line with the more general principle that compression is connected with learnability. 
Our framework also has practical benefits as it allows for an intrinsic evaluation of tokenization functions.\looseness=-1

\section*{Limitations}

It is possible that there is a hidden effect caused by the language pair direction, model selection, or training data and its size.
However, our results bear high statistical significance for cases where we desire high correlation and low statistical significance where we expect low correlation.
Assured by this and concerned by the large cost of training a large number of MT systems, we did not experiment with larger data or other language directions apart from limited additional experiments in \Cref{tab:comparison_otherlangs}.

\section*{Acknowledgements}
We would like to acknowledge Eleanor Chodroff and Shayne Sloggett for extensive discussion about the experimental design in Experiment 2.
Clara Meister was supported by the Google PhD Fellowship. 
Juan Luis Gastaldi has received funding from the European Union's Horizon 2020 research and innovation programme under grant agreement No 839730.

\bibliography{misc/bibliography}
\bibliographystyle{misc/acl_natbib}

\appendix

\onecolumn

\section{Related Work}
\label{sec:related}

Prior to the widespread adoption of subword tokenization, large vocabulary sizes (e.g., 500k) were needed to allow for output expressivity and to avoid a high proportion of out-of-vocabulary tokens. Various tricks were devised to tackle the resulting computational issues \citep{jean2015using,mi2016vocabulary,l2016vocabulary}.
On the other side of the spectrum, character-level NMT was also explored \citep{ling2015character,costa2016character}, though issues arise with large sequence lengths.
\citet{mielke2021between} provide an overview of the evolution of NLP tokenization and describe different types of tokenization approaches.
They conclude that reasoning about tokenizer choices remains a vital part of modern pipeline preparation.
In this context, our work quantifies and hence also automates some of this process by offering a framework to help guide the decision process and hyperparameter selection.
Somewhat similar to our work,  \citet{ataman2018evaluation} perform a comparison between BPE and Morfessor, though with only one specific vocabulary size (30k).
Similarly to \citet{gowda2020finding}, they suggest that homogeneity of token frequency is an important factor for MT model performance.

\section{Package Usage}
\label{sec:package_howto}

The package is open-source\footnote{\href{https://github.com/zouharvi/tokenization-scorer}{github.com/zouharvi/tokenization-scorer}} and can be downloaded via \texttt{pip} and used via the command-line:
\begin{leftbar1}
\begin{verbatim}
$ pip3 install tokenization-scorer
$ tokenization-scorer -i en-de.tokenized_with_unigramlm.{en,de}
> 0.4826

$ tokenization-scorer -i en-de.tokenized_with_wordpiece.{en,de}
> 0.5047
\end{verbatim}
\end{leftbar1}
\noindent or as a module in Python:
\begin{leftbar1}
\begin{verbatim}
import tokenization_scorer
text1 = "pick @@ed pick @@l @@ed pick @@les"
tokenization_scorer.score(text1, metric="renyi", power=2.5)
> 0.8031528501359657

text2 = "pick @@e @@d pick @@l @@e @@d pick @@l @@e @@s"
tokenization_scorer.score(text2, metric="renyi", power=2.5)
> 0.9105681923824472
\end{verbatim}
\end{leftbar1}
\noindent The supported metrics are the ones presented in \Cref{tab:comparison_corr_all}: \texttt{renyi\_efficiency} (default), \texttt{renyi\_entropy}, \texttt{shannon\_efficiency}, \texttt{shannon\_entropy}, \texttt{percentile\_freq}, \texttt{bits}, \texttt{sequence\_len}.
The power in Rényi can be modified using an extra parameter: \texttt{-e power=2.5}.
The similar applies to the percentile frequency with \texttt{-e perc\_start=0.03 perc\_end=0.83}.

\clearpage

\begin{table*}[htbp]
\centering
\resizebox{\linewidth}{!}{ 
\begin{tabular}{lcccc<{\hspace{4mm}}cccc}
\toprule
\textbf{Predictor}
& \multicolumn{4}{c}{\textbf{En$\rightarrow$De}}
& \multicolumn{4}{c}{\textbf{Cs$\rightarrow$En}}\\
\midrule
& \bleu & \chrf & \hspace{-2mm}\bleurt\hspace{-2mm} & \hspace{-2mm}\comet\hspace{-2mm}
& \bleu & \chrf & \hspace{-2mm}\bleurt\hspace{-2mm} & \hspace{-2mm}\comet\hspace{-2mm} \\
\midrule
Sequence length                & 0\% & 4\% & 23\% & 13\% & 4\% & 24\% & 51\% & 17\% \\
Percentile freq.               & 9\% & 21\% & 47\% & 33\% & 25\% & 63\% & 86\% & 55\% \\
Entropy                        & 0\% & 5\% & 24\% & 14\% & 7\% & 31\% & 58\% & 23\% \\
Entropy efficiency             & 22\% & 8\% & 0\% & 1\% & 6\% & 1\% & 3\% & 3\% \\
Rényi entropy                  & 7\% & 0\% & 6\% & 1\% & 0\% & 7\% & 24\% & 4\% \\
Rényi efficiency               & 53\% & 33\% & 12\% & 19\% & 32\% & 35\% & 17\% & 39\%  \\
\bottomrule
\end{tabular}
}
\caption{Variance explained between predictors and MT performance (\bleu, \chrf, \bleurt and \comet) in Experiment 1 (only 3 MT seeds, 5 temperatures and 4 vocabulary sizes) with different language directions.}
\label{tab:comparison_otherlangs}
\end{table*}

\section{Proofs}
\label{sec:proofs}

\begin{lemma}\label{lemma:cov}
Let $\pdeltastar$ be a distribution over $\alphabettwo^*$, and let $\pdelta$ be the unigram distribution induced by $\pdeltastar$ (\cref{def:unigram_distribution}). 
Then, the following equality holds\looseness=-1
\begin{subequations}
\begin{align}
\Lexpect \cdot \sum_{\delta \in \alphabettwo} \pdelta(\delta)   |\enc{\alphabettwo}(\delta)|+ \mathrm{Cov}\left( \avgencodinglen, \strlen \right) = \Lwencone
\end{align}
\end{subequations}
\end{lemma}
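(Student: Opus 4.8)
The plan is to expand the sequence-level expected code length $\Lwencone$ into an expectation over $\pdeltastar$, decompose it token by token, recognize the result as $\expect[\strlen \cdot \avgencodinglen]$, and then finish with the elementary covariance identity $\expect[XY] = \expect[X]\,\expect[Y] + \mathrm{Cov}(X,Y)$.

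First I would write out the definition of the sequence-level encoder's expected code length,
\begin{equation}
\Lwencone = \sum_{\bdelta \in \alphabettwo^*} \pdeltastar(\bdelta)\, |\enc{\alphabettwo}(\bdelta)|,
\end{equation}
and invoke the fact that the sequence-level encoder is the concatenation of token-level codewords to obtain $|\enc{\alphabettwo}(\bdelta)| = \sum_{\delta \in \alphabettwo} \countc(\delta, \bdelta)\, |\enc{\alphabettwo}(\delta)|$. Multiplying and dividing by $|\bdelta|$ then surfaces the unigram random variable $X_\delta(\bdelta) = \countc(\delta, \bdelta)/|\bdelta|$, so that $|\enc{\alphabettwo}(\bdelta)| = |\bdelta| \cdot \avgencodinglen(\bdelta)$. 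Substituting back gives $\Lwencone = \expect[\strlen \cdot \avgencodinglen]$, i.e., the per-sequence cost is exactly the expectation of the product of sequence length and average per-token code length.

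Next I would apply the covariance identity with $X = \strlen$ and $Y = \avgencodinglen$, yielding $\Lwencone = \expect[\strlen]\,\expect[\avgencodinglen] + \mathrm{Cov}(\avgencodinglen, \strlen)$. Since $\expect[\strlen] = \Lexpect$ by definition, it remains only to identify $\expect[\avgencodinglen] = \sum_{\delta \in \alphabettwo} \pdelta(\delta)\, |\enc{\alphabettwo}(\delta)|$. This follows by swapping the order of summation in $\expect[\avgencodinglen] = \sum_{\bdelta} \pdeltastar(\bdelta) \sum_{\delta} X_\delta(\bdelta)\, |\enc{\alphabettwo}(\delta)|$ and recognizing that $\sum_{\bdelta} \pdeltastar(\bdelta)\, \countc(\delta, \bdelta)/|\bdelta|$ is precisely $\pdelta(\delta)$ from the definition of the induced unigram distribution (\cref{def:unigram_distribution}). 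Rearranging the covariance identity then gives the claimed equality.

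The argument is essentially bookkeeping, but the step I would flag as the crux is this last identification $\expect[\avgencodinglen] = \Lwenconedelta$: it is where the definition of the induced unigram distribution does the real work, converting the per-sequence statistics under $\pdeltastar$ into per-token statistics under $\pdelta$. The companion observation $|\enc{\alphabettwo}(\bdelta)| = |\bdelta| \cdot \avgencodinglen(\bdelta)$ is the only other place where genuine structure (prefix-free concatenation) is used; everything else is linearity of expectation and Fubini-style reindexing of finite or absolutely convergent sums.
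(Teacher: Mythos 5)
Your proof is correct and follows essentially the same route as the paper's: both decompose the concatenated code length as $|\enc{\alphabettwo}(\bdelta)| = \sum_{\delta \in \alphabettwo} \countc(\delta,\bdelta)\,|\enc{\alphabettwo}(\delta)| = \strlen(\bdelta)\cdot\avgencodinglen(\bdelta)$, identify the resulting per-token expectations with $\pdelta$ via the definition of the induced unigram distribution, and finish with the identity $\expect[XY] = \expect[X]\,\expect[Y] + \mathrm{Cov}(X,Y)$. The only cosmetic difference is that you apply the covariance identity once, globally, to the pair $(\avgencodinglen, \strlen)$, whereas the paper applies it per token to each pair $(X_\delta, \strlen)$ and then collects $\sum_{\delta \in \alphabettwo}\mathrm{Cov}(X_\delta,\strlen)\,|\enc{\alphabettwo}(\delta)|$ into $\mathrm{Cov}(\avgencodinglen,\strlen)$ by bilinearity of covariance.
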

\begin{proof}
Let $\bdelta \in \alphabettwo^*$.
Define the \defn{expected counts} as follows
\begin{subequations}
\begin{align}
   \ecount &\defeq \sum_{\bdelta \in \alphabettwo^*} \pdeltastar(\bdelta)\, \countc(\delta, \bdelta) \\
   &=  \sum_{\bdelta \in \alphabettwo^*} \pdeltastar(\bdelta) \frac{\countc(\delta, \bdelta)}{|\bdelta|}|\bdelta| \\
   &= \expectdelta\left[ X_{\delta}(\bdelta) \cdot \strlen \right] 
\end{align}
\end{subequations}
We start by manipulating the expected code length %
\begin{subequations}
\begin{align}
\sum_{\bdelta \in \alphabettwo^*} \pdeltastar(\bdelta) |\enc{\alphabettwo}(\bdelta)| &= \sum_{\bdelta \in \alphabettwo^*} \pdeltastar(\bdelta) \sum_{n=1}^{|\bdelta|} |\enc{\alphabettwo}(\delta_n)|\label{eq:expect-cl} \\
&= \sum_{\bdelta \in \alphabettwo^*} \sum_{n=1}^{|\bdelta|} \pdeltastar(\bdelta) |\enc{\alphabettwo}(\delta_n)| \\
&= \sum_{\bdelta \in \alphabettwo^*} \sum_{\delta \in \alphabettwo} \pdeltastar(\bdelta) \countc(\delta, \bdelta) |\enc{\alphabettwo}(\delta)| \\
&= \sum_{\delta \in \alphabettwo}  |\enc{\alphabettwo}(\delta)| \left(\sum_{\bdelta \in \alphabettwo^*}\pdeltastar(\bdelta) \countc(\delta, \bdelta) \right)  \\
&= \sum_{\delta \in \alphabettwo} \ecount |\enc{\alphabettwo}(\delta)| 
\end{align}
\end{subequations}
Now, we proceed with algebraic manipulation.
\begin{subequations}
\begin{align}
\sum_{\delta \in \alphabettwo} \ecount &|\enc{\alphabettwo}(\delta)| = \sum_{\delta \in \alphabettwo} \expectdelta\left[ X_{\delta}(\bdelta) \cdot\strlen \right] |\enc{\alphabettwo}(\delta)| \\
&= \sum_{\delta \in \alphabettwo}\left( \expectdelta\left[ X_{\delta}(\bdelta) \right] \expectdelta\left[\strlen \right] |\enc{\alphabettwo}(\delta)| + \mathrm{Cov}
\left( X_{\delta}(\bdelta), \strlen \right)|\enc{\alphabettwo}(\delta)|\right) \\
\label{eq:other_direction}
&= \sum_{\delta \in \alphabettwo} \expectdelta\left[ X_{\delta}(\bdelta) \right] \expectdelta\left[\strlen\right] |\enc{\alphabettwo}(\delta)|+  \sum_{\delta \in \alphabettwo} \mathrm{Cov}
\left( X_{\delta}(\bdelta), \strlen\right)|\enc{\alphabettwo}(\delta)| \\
&= \sum_{\delta \in \alphabettwo} \expectdelta\left[ X_{\delta}(\bdelta) \right] \expectdelta\left[\strlen\right] |\enc{\alphabettwo}(\delta)|+ \mathrm{Cov}
\left( \sum_{\delta \in \alphabettwo} |\enc{\alphabettwo}(\delta)|  X_{\delta}(\bdelta), \strlen\right) \\
&= \sum_{\delta \in \alphabettwo} \expectdelta\left[ X_{\delta}(\bdelta) \right] \expectdelta\left[\strlen\right] |\enc{\alphabettwo}(\delta)|+ \mathrm{Cov}\Bigg(
 \underbrace{\sum_{\delta \in \alphabettwo} |\enc{\alphabettwo}(\delta)|  X_{\delta}(\bdelta)}_{\defeq \avgencodinglen}, \strlen\Bigg)\\
 &= \Lexpect \cdot \underbrace{\sum_{\delta \in \alphabettwo} \pdelta(\delta)   |\enc{\alphabettwo}(\delta)|}_{\text{expected unigram code length}}+ \mathrm{Cov}\left(
 \avgencodinglen, \strlen \right)
\end{align}
\end{subequations}
\end{proof}

\codebounded*

\begin{proof}
By \cref{lemma:cov}, we have
\begin{equation}
\Lexpect \cdot \sum_{\delta \in \alphabettwo} \pdelta(\delta)   |\encopt{\alphabettwo}(\delta)|+ \mathrm{Cov}\left( \avgencodinglenopt, \strlen \right) = \Lwenconeopt
\end{equation}
Now, by applying \cref{thm:shannon_source_coding}, we achieve
\begin{subequations}
\begin{align}
\Lexpect\cdot \ent(\rvDelta) & \leq \Lexpect \sum_{\delta \in \alphabettwo}  \pdelta(\delta) |\encopt{\alphabettwo}(\delta)| \\
 \Lexpect\cdot \ent(\rvDelta) + \mathrm{Cov}\left(
 \avgencodinglenopt, \strlen \right) &\leq \Lexpect \sum_{\delta \in \alphabettwo}  \pdelta(\delta)  
 |\encopt{\alphabettwo}(\delta)| + \mathrm{Cov}\left(
 \avgencodinglenopt, \strlen \right) \\
 \Lexpect\cdot \ent(\rvDelta) + \mathrm{Cov}\left(
 \avgencodinglenopt, \strlen \right) &\leq \Lwenconeopt 
\end{align}
\end{subequations}
Similarly, from \cref{thm:shannon_source_coding}, we have
\begin{subequations}
\begin{align}
\Lexpect\cdot \lceil \ent(\rvDelta)\rceil & \geq \Lexpect \sum_{\delta \in \alphabettwo}  \pdelta(\delta) |\encopt{\alphabettwo}(\delta)| \\
 \Lexpect\cdot \lceil \ent(\rvDelta)\rceil  + \mathrm{Cov}\left(
 \avgencodinglenopt, \strlen \right) &\geq \Lexpect \sum_{\delta \in \alphabettwo}  \pdelta(\delta)  
 |\encopt{\alphabettwo}(\delta)| + \mathrm{Cov}\left(
 \avgencodinglenopt, \strlen \right) \\
 \Lexpect\cdot \lceil \ent(\rvDelta)\rceil  + \mathrm{Cov}\left(
 \avgencodinglenopt, \strlen \right) &\geq \Lwenconeopt 
\end{align}
\end{subequations}
Putting these together with some additional algebraic manipulation we get
\begin{equation}
\ent(\rvDelta) \leq \frac{\Lwenconeopt - \mathrm{Cov}(\avgencodinglenopt,\strlen)}{\Lexpect}  \leq \lceil \ent(\rvDelta) \rceil
\end{equation}
This concludes the proof.
\end{proof}

\efficiencybound*
\begin{proof}
Recall that $\eff(\psigmastar,t) = \frac{\Lwenconeopt}{\mathcal{L}_{\encuni{\alphabettwo}}(\pdeltastar)}$.
To bound the denominator we use the fact, that for uniform distribution, $\mathrm{Cov}\left( \avgencodinglenuni, \strlen \right) = 0$ and obtain
$\Lexpect  \log |\alphabettwo| \leq \mathcal{L}_{\encuni{\alphabettwo}}(\pdeltastar) \leq \Lexpect  \lceil\log  |\alphabettwo| \rceil.$
Using this fact, we can obtain an upper bound
\begin{subequations}
\begin{align}
\eff(\psigmastar,t) &= \frac{\Lwenconeopt}{\mathcal{L}_{\encuni{\alphabettwo}}(\pdeltastar)} \\
&\leq \frac{\Lexpect \cdot \lceil \ent(\rvDelta)\rceil + \mathrm{Cov}\left( \avgencodinglenopt, \strlen \right)}{\mathcal{L}_{\encuni{\alphabettwo}}(\pdeltastar)} & \prooftext{(numerator lower-bound from \Cref{th:codebounded})}\\
&\leq \frac{\Lexpect \cdot \lceil \ent(\rvDelta) \rceil + \mathrm{Cov}\left( \avgencodinglenopt, \strlen \right)}{\Lexpect \log |\alphabettwo| } &\prooftext{(denominator upper-bound)}\\
 &= \frac{\lceil \ent(\rvDelta)\rceil +\frac{\mathrm{Cov}\left( \avgencodinglenopt, \strlen \right)}{\Lexpect}}{\log |\alphabettwo|}
\end{align}
\end{subequations}
and a corresponding lower bound
\begin{subequations}
\begin{align}
\eff(\psigmastar,t) &= \frac{\Lwenconeopt}{\mathcal{L}_{\encuni{\alphabettwo}}(\pdeltastar)} \\
&\geq \frac{\Lexpect \cdot  \ent(\rvDelta) + \mathrm{Cov}\left( \avgencodinglenopt, \strlen \right)}{\mathcal{L}_{\encuni{\alphabettwo}}(\pdeltastar)} & \prooftext{(numerator lower-bound from \Cref{th:codebounded})}\\
&\geq \frac{\Lexpect \cdot\ent(\rvDelta)  + \mathrm{Cov}\left( \avgencodinglenopt, \strlen \right)}{\Lexpect  \lceil \log |\alphabettwo|\rceil } &\prooftext{(denominator upper-bound)}\\
 &= \frac{ \ent(\rvDelta) +\frac{\mathrm{Cov}\left( \avgencodinglenopt, \strlen \right)}{\Lexpect}}{\lceil\log |\alphabettwo|\rceil}
\end{align}
\end{subequations}
\end{proof}

\begin{theorem}[Generalized Hölder's inequality; \S 2 in \citet{Aczel1980}]\label{thm:gen-holder}
Let $f$, $g$ and $h$ be vectors of positive values and coefficients $p$, $q$ and $r$ such that all but one are negative and $\frac{1}{p}+\frac{1}{q}+\frac{1}{r}=0$.
Further, if $\forall i: f_i g_i h_i = 1$ then
\begin{align}
\|f\|_p \, \|g\|_q \, \|h\|_r \leq 1. \label{eq:gen-holder}
\end{align}
\end{theorem}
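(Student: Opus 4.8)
The plan is to reduce this mixed-sign (reverse) Hölder inequality to the ordinary two-factor Hölder inequality by exploiting the pointwise constraint $f_i g_i h_i = 1$. Since exactly one of $p,q,r$ is positive while the other two are negative, and since both the hypotheses and the conclusion are symmetric under simultaneous permutation of the pairs $(f,p),(g,q),(h,r)$, I may assume without loss of generality that $p>0$ and $q,r<0$. Writing $q' = -q > 0$ and $r' = -r > 0$, the constraint $\tfrac1p+\tfrac1q+\tfrac1r=0$ becomes $\tfrac1p = \tfrac1{q'}+\tfrac1{r'}$. Setting $\lambda = p/q'$ and $\mu = p/r'$ then yields $\lambda+\mu = 1$ with $\lambda,\mu\in(0,1)$, i.e.\ exactly the pair of conjugate weights Young's inequality wants.

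Next I would unwind the norms, which is the step requiring care. Because $q,r<0$, the norm factors are \emph{negative} powers of their sums: $\|g\|_q = \left(\sum_i g_i^{-q'}\right)^{-1/q'}$ and $\|h\|_r = \left(\sum_i h_i^{-r'}\right)^{-1/r'}$. Hence, rewriting the target as $\|f\|_p \le 1/(\|g\|_q\|h\|_r)$ and raising to the positive power $p$ turns it into the equivalent claim $\sum_i f_i^p \le \left(\sum_i g_i^{-q'}\right)^{\lambda}\left(\sum_i h_i^{-r'}\right)^{\mu}$. Keeping the exponent signs straight here, so that the negative-exponent norms reappear as the positive powers $\lambda,\mu$ on the right, is the one place where an error is easy to make, and I expect it to be the main obstacle; everything after it is mechanical.

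The constraint then does the essential algebraic work. From $f_i = 1/(g_i h_i)$ I obtain $f_i^p = g_i^{-p}h_i^{-p} = \left(g_i^{-q'}\right)^{p/q'}\left(h_i^{-r'}\right)^{p/r'} = u_i^{\lambda}v_i^{\mu}$, where $u_i \defeq g_i^{-q'}>0$ and $v_i \defeq h_i^{-r'}>0$. The target therefore collapses exactly to $\sum_i u_i^{\lambda}v_i^{\mu} \le \left(\sum_i u_i\right)^{\lambda}\left(\sum_i v_i\right)^{\mu}$, which is ordinary Hölder for the conjugate exponents $(1/\lambda, 1/\mu)$.

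Finally I would prove this last inequality from first principles, rather than quoting it, since only the generalized form is stated in the text. Writing $U = \sum_i u_i$ and $V = \sum_i v_i$ and dividing through, it suffices to show $\sum_i (u_i/U)^{\lambda}(v_i/V)^{\mu}\le 1$. Applying the weighted AM--GM (Young) inequality $a^{\lambda}b^{\mu}\le \lambda a + \mu b$ termwise with $a=u_i/U$ and $b=v_i/V$ bounds the left-hand side by $\lambda\sum_i u_i/U + \mu\sum_i v_i/V = \lambda+\mu = 1$, which closes the argument.
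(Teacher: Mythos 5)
Your proof is correct, but it is worth noting that the paper does not actually prove this statement at all: it imports it wholesale, citing \S 2 of \citet{Aczel1980} and remarking that it is a special case of Theorem 12 in \citet{hardy1988inequalities}, and then only uses the corollary form (\cref{thm:rev-holder}). So your argument is a genuinely different contribution — a self-contained, elementary derivation. Your route is sound at every step: the WLOG reduction to $p>0$, $q,r<0$ is justified by the full symmetry of the hypotheses and conclusion under permuting the pairs $(f,p),(g,q),(h,r)$; the substitutions $q'=-q$, $r'=-r$, $\lambda=p/q'$, $\mu=p/r'$ correctly give $\lambda+\mu=1$ with $\lambda,\mu\in(0,1)$; the pointwise constraint $f_i=1/(g_ih_i)$ collapses the claim to $\sum_i u_i^{\lambda}v_i^{\mu}\leq\left(\sum_i u_i\right)^{\lambda}\left(\sum_i v_i\right)^{\mu}$; and the termwise Young/weighted AM--GM step closes it. What your approach buys is transparency — the mixed-sign ("reverse") Hölder inequality is exposed as nothing more than two-factor Hölder plus the algebraic constraint, with no appeal to outside references; what the paper's citation buys is brevity and the full generality of the classical result (arbitrary numbers of factors, integrals rather than finite sums), which is more than the paper needs. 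One small presentational point: since the paper's downstream use is exactly the case $r=-1$, $h_i = f_i g_i$ (its \cref{thm:rev-holder}), you could observe that your WLOG case already covers that instance directly, making the corollary immediate from your argument as well.
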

As noted in \citet{Aczel1980}, \cref{thm:gen-holder} is in fact a simple special case of Theorem 12 in \citet{hardy1988inequalities}.
We will use \cref{thm:gen-holder} specifically with $r=-1$ and $h_i=f_i g_i$.
This simplifies the requirements for exactly one of $p$ and $q$ to be negative and $\frac{1}{p}+\frac{1}{q}=1$.
\Cref{eq:gen-holder} can then be restated as the following.
\begin{corollary}[Reverse Hölder's inequality]\label{thm:rev-holder}
If $f, g$ are positive vectors and $p,q$ are such that $\frac{1}{p}+\frac{1}{q}=1$ and exactly one is negative and the other is positive, then
\begin{align}
\|f\|_p \, \|g\|_q \leq \|f g\|_1. \label{eq:rev-holder}
\end{align}
\end{corollary}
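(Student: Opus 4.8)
The plan is to derive \cref{thm:rev-holder} as an immediate specialization of the generalized Hölder inequality \cref{thm:gen-holder}, exactly as foreshadowed in the remark preceding the corollary. The essential idea is to supply a third ``dummy'' vector and a third exponent to \cref{thm:gen-holder} so that its hypotheses collapse onto those of the corollary and its conclusion rearranges into \cref{eq:rev-holder}.

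First I would apply \cref{thm:gen-holder} with its first two vectors and exponents taken to be the given $f,g$ and $p,q$, and with the third exponent fixed to $r=-1$. Under this choice the coefficient constraint $\tfrac{1}{p}+\tfrac{1}{q}+\tfrac{1}{r}=0$ becomes $\tfrac{1}{p}+\tfrac{1}{q}=1$, which is exactly the hypothesis of \cref{thm:rev-holder}. The sign condition of \cref{thm:gen-holder} (``all but one are negative'') is likewise met: since $r=-1<0$, requiring in addition that exactly one of $p,q$ be negative leaves precisely two negative exponents and one positive one.

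Next I would choose the third vector so that the normalization $f_i g_i h_i = 1$ holds termwise; this forces $h_i = 1/(f_i g_i)$, which is positive because $f,g$ are positive vectors, so every hypothesis of \cref{thm:gen-holder} is satisfied. The one computation to carry out is the negative-order norm of this vector: with the convention $\|h\|_{-1} = \bigl(\sum_i h_i^{-1}\bigr)^{-1}$ and $h_i^{-1} = f_i g_i$, I obtain $\|h\|_{-1} = \bigl(\sum_i f_i g_i\bigr)^{-1} = \|fg\|_1^{-1}$. Plugging $f,g,h$ into the conclusion $\|f\|_p \|g\|_q \|h\|_{-1} \le 1$ of \cref{thm:gen-holder} and multiplying through by $\|fg\|_1 > 0$ then yields $\|f\|_p \|g\|_q \le \|fg\|_1$, which is \cref{eq:rev-holder}.

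Because the construction is a direct substitution, there is no deep obstacle; the only care needed is bookkeeping around the negative exponent. Specifically, I would double-check that the $(-1)$-norm convention is applied consistently, that positivity of $f,g$ guarantees $h$ is well defined and finite (no entry of $fg$ vanishes), and that $\|fg\|_1>0$ so the final multiplication preserves the direction of the inequality. A last sanity check is to confirm that swapping which of $p,q$ is negative still leaves the sign pattern of $(p,q,r)$ compatible with the ``all but one are negative'' clause of \cref{thm:gen-holder}.
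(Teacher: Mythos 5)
Your proof is correct and takes essentially the same route as the paper: specialize \cref{thm:gen-holder} with $r=-1$ and a third vector chosen so that the termwise normalization $f_i g_i h_i = 1$ holds, then unfold the $(-1)$-norm and rearrange. If anything, your bookkeeping is more careful than the paper's one-line derivation, since your choice $h_i = (f_i g_i)^{-1}$ is the one that actually satisfies the normalization constraint, whereas the paper's shorthand ``$h_i = f_i g_i$'' taken literally would contradict it.
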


\generalizedcampbell*
\begin{proof}
Let $p=-s$ and $q=1-\alpha$.  Let us consider three cases of $s$:
\begin{itemize}
\item $s = 0$, then $\alpha = 1$, then \cref{thm:shannon_source_coding} applies;
\item $s\in(-1,0)$, then $\alpha \in (1,\infty)$, $p \in (0,1)$ and $q \in (-\infty, 0)$.
\item $s\in(0,\infty)$, then $\alpha \in (0,1)$, $p \in (-\infty, 0)$ and $q \in (0, 1)$.
\end{itemize}
In the latter two cases, we have one of $p$ and $q$ is positive and one is negative. Further, our definitions of $p$ and $q$ imply that $p^{-1} + q^{-1} = 1$,  which allows us to use the reverse Hölder's inequality (\cref{thm:rev-holder}). 
First, we note that the  reverse Hölder's inequality implies that, for finite sequences $(x_i)$ and $(y_i)$,
\newcommand{\sumdelta}{\sum_{\delta\in\alphabettwo}}
\begin{align}
 \left(\sum x_i^p\right)^{\frac{1}{p}} \left(\sum y_i^q\right)^{\frac{1}{q}} &\leq \sum x_i y_i \\
 \left(\sum x_i^{-s}\right)^{-\frac{1}{s}} \left(\sum y_i^{1-\alpha}\right)^{\frac{1}{1-\alpha}} &\leq \sum x_i y_i & \prooftext{($p$ and $q$ substitution)}
\end{align}
Let $\ell(\delta) = |\enc{\alphabettwo}(\delta)|$ be the lengths of the codes given by our encoding. Further, let $\pdelta(\delta)$ be the unigram probabilities and $b$ be the base of our encoding. 
Now set $x_i = \pdelta(\delta)^{-\frac{1}{s}}b^{-\ell(\delta)}$ and $y_i=\pdelta(\delta)^{\frac{1}{s}}$.
This step is valid because both quantities are positive.
Then we proceed with algebraic manipulations
\begin{subequations}
\begin{align}
\left(\sumdelta \left(\pdelta(\delta)^{-\frac{1}{s}}b^{-\ell(\delta)}\right)^{-s}\right)^{-\frac{1}{s}} \left(\sumdelta (\pdelta(\delta)^{\frac{1}{s}})^{1-\alpha}\right)^{\frac{1}{1-\alpha}} &\leq \sumdelta (\pdelta(\delta)^{-\frac{1}{s}}b^{-\ell(\delta)}) (\pdelta(\delta)^{\frac{1}{s}}) \hspace{-2cm} \\
\left(\sumdelta (\pdelta(\delta)^{-\frac{1}{s}}b^{-\ell(\delta)})^{-s}\right)^{-\frac{1}{s}} \left(\sumdelta (\pdelta(\delta)^{\frac{1}{s}})^{1-\alpha}\right)^{\frac{1}{1-\alpha}} &\leq \sumdelta b^{-\ell(\delta)} & \prooftext{\,\,\,(algebra)} \\
\underbrace{\left(\sumdelta \pdelta(\delta) b^{s\cdot \ell(\delta)}\right)^{-\frac{1}{s}}}_{\defeq A} \underbrace{\left(\sumdelta \pdelta(\delta)^{\alpha}\right)^{\frac{1}{1-\alpha}}}_{\defeq B} &\leq \underbrace{\sumdelta b^{-\ell(\delta)}}_{\defeq C} & \prooftext{\,\,\,(algebra)}
\end{align}
\end{subequations}
Then, let $A \defeq \left(\sumdelta \pdelta(\delta) b^{s\cdot \ell(\delta)}\right)^{-\frac{1}{s}}$, $B \defeq \left(\sumdelta \pdelta(\delta)^{\alpha}\right)^{\frac{1}{1-\alpha}}$ and $C \defeq \sumdelta b^{-\ell(\delta)}$ in the subsequent proof descriptions.
Note that by the Kraft--McMillan inequality \citep{kraft1949device}, because our code is prefix-free (and $b > 0$), it must be that $0<C\leq1$.
\begin{subequations}
\begin{align}
 \frac{\left(\sumdelta \pdelta(\delta)^{\alpha}\right)^{\frac{1}{1-\alpha}}}{\sumdelta b^{-\ell(\delta)}} &\leq \left(\sumdelta \pdelta(\delta) b^{s\cdot \ell(\delta)}\right)^{\frac{1}{s}} &
\prooftext{(divide by $AC$)} \\
 \left(\sumdelta \pdelta(\delta)^{\alpha}\right)^{\frac{1}{1-\alpha}} &\leq \left(\sumdelta \pdelta(\delta) b^{s\cdot \ell(\delta)}\right)^{\frac{1}{s}} &
\prooftext{($0<C\leq1$)} \\
 \frac{1}{1-\alpha} \log \left(\sumdelta \pdelta(\delta)^{\alpha}\right) &\leq \frac{1}{s} \log \left(\sumdelta \pdelta(\delta) b^{s\cdot \ell(\delta)}\right) &
\prooftext{(take $\log$)} \\
 \ent_\alpha(\rvDelta)&\leq \mathcal{L}_\text{enc}^{(s)}(\pdelta)\label{eq:final}
\end{align}
\end{subequations}
Our constraint on $C$ is met and \cref{eq:final} holds with equality when the lengths of our code satisfy the following relationship
\begin{subequations}
\begin{align}
 b^{-\ell(\delta)} &= \frac{\pdelta(\delta)^\alpha}{\sum_{\delta' \in \alphabettwo} \pdelta(\delta')^\alpha}
& \prooftext{} \\
 \ell(\delta) &= -\alpha \log_b \pdelta(\delta)+ \log_b \left(\sum_{\delta' \in \alphabettwo} \pdelta(\delta')^\alpha\right)
& \prooftext{}
\label{thm_line:lemma_end}
\end{align}
\end{subequations}
Now we consider a sequence of $M$ tokens $\bdelta = \langle \delta_1, \ldots, \delta_M \rangle$, where each $\delta_m$ is sampled according to $\pdelta$; note that this is \emph{not} necessarily equivalent to a sequence $\bdelta\sim\pdeltastar$, as we do not assume independence between tokens in that setting. Let us first lift $\pdelta$ to take  sequences, i.e.,  $\pdelta(\bdelta) = \prod_{m=1}^M\pdelta(\delta)$, which follows naturally due to the independence of each $\delta_m$ in this setting. Now let 
\begin{align}\label{eq:Q-def}
    Q = \sum_{\bdelta\in \Delta^M} \pdelta(\bdelta)^\alpha= \left(\sumdelta \pdelta(\delta)^\alpha\right)^M
\end{align}
where the later equality follows from the fact that there are $|\alphabettwo|^M$ sequences in $\Delta^M$ and each $\delta$ appears an equal number of times. We will now use  \Cref{thm_line:lemma_end} to reason about the length of an optimal code for $\bdelta$, where we similarly denote length as $\ell(\bdelta)$. 
We will first assume that $s\in(-1,0)$ but later show that a slight modification to the proof makes it viable also for $s>0$. Following the result in \Cref{thm_line:lemma_end}, the length $\ell(\bdelta)$ of an optimal integer-length code for $\bdelta$ should satisfy
\begin{subequations}
\begin{align}
-\alpha \log_b \pdelta(\bdelta) + \log_b Q &\leq \ell(\bdelta) < -\alpha \log_b \pdelta(\bdelta) + \log_b Q+1
& \prooftext{} \\
-s \alpha \log_b \pdelta(\bdelta) + s \log_b Q &\geq s \cdot\ell(\bdelta) > s - s \alpha \log_b \pdelta(\bdelta) + s\log_b Q
& \prooftext{(multiply by $s\in(-1,0)$)} \label{thm_line:multiply_t_0} \\
 \pdelta(\bdelta)^{-s \alpha} Q^{s} &\geq b^{s \cdot\ell(\bdelta)} > b^{s} \pdelta(\bdelta)^{- s \alpha} Q^{s}
& \prooftext{(raise to power $b$)} \\
\pdelta(\bdelta)^{-s \alpha+1} Q^{s} &\geq \pdelta(\bdelta) b^{s \cdot\ell(\bdelta)} > b^{s} \pdelta(\bdelta)^{- s \alpha +1} Q^{s}
& \prooftext{(multiply by $\pdelta(\bdelta)$)} \\
\pdelta(\bdelta)^{\alpha} Q^{s} &\geq \pdelta(\bdelta) b^{s \cdot\ell(\bdelta)} > b^{s} \pdelta(\bdelta)^{\alpha} Q^{s}
& \prooftext{($s\alpha = 1-\alpha$)} \\
\sum_{\bdelta\in \Delta^M} \pdelta(\bdelta)^{\alpha}Q^{s} &\geq \sum_{\bdelta\in \Delta^M} \pdelta(\bdelta) b^{s \cdot\ell(\bdelta)} > Q^{s} b^{s} \sum_{\bdelta\in \Delta^M} \pdelta(\bdelta)^{\alpha} \nonumber
& \prooftext{} \\
& & \hspace{-3cm}\prooftext{(sum across all sequences of length $M$)}\\
Q^{s+1} &\geq \sum_{\bdelta\in \Delta^M} \pdelta(\bdelta) b^{s \cdot\ell(\bdelta)} > Q^{s+1} b^{s}
& \prooftext{(sub. $Q$)} \\
Q^{s+1} &\geq \left(\sumdelta \pdelta(\delta) b^{s \cdot\ell(\delta)} \right)^M > Q^{s+1} b^{s}
& \prooftext{(same logic as \cref{eq:Q-def})} \\
(s+1) \log_b Q &\geq M\log_b \left(\sumdelta \pdelta(\delta) b^{s \cdot\ell(\delta)}\right) > (s+1) \log_b Q + s\hspace{-2cm}
& \prooftext{(take $\log_b$)} \\
\frac{s+1}{s} \log_b Q &\leq \frac{M}{s} \log_b \left(\sumdelta \pdelta(\delta) b^{s \cdot\ell(\delta)}\right) < \frac{s+1}{s} \log_b Q + 1 \hspace{-3cm} & \prooftext{}\nonumber\\
&& \prooftext{(divide by $s\in (-1,0)$) }\label{thm_line:multiply_t_1}  \\
\frac{1}{1-\alpha} \log_b Q &\leq \frac{M}{s} \log_b \left(\sumdelta \pdelta(\delta)  b^{s \cdot\ell(\delta)}\right) < \frac{1}{1-\alpha} \log_b Q + 1\hspace{-2cm}
& \prooftext{(substitute $\alpha$)} \\
\frac{1}{1-\alpha} \log_b \left(\sumdelta \pdelta(\delta)^\alpha\right)^M \hspace{-4mm} &\leq \frac{M}{s} \log_b \left(\sumdelta \pdelta(\delta) b^{s \cdot\ell(\delta)}\right) < \frac{1}{1-\alpha} \log_b \left(\sumdelta \pdelta(\delta)^\alpha\right)^M + 1 \hspace{-5cm}& \prooftext{} \nonumber\\
& & \prooftext{(definition of $Q$)} \\
M \cdot \ent_\alpha(\rvDelta) &\leq M \cdot\Lwencdeltaopt{s} < M \cdot \ent_\alpha(\rvDelta) + 1
& \hspace{-2cm}\prooftext{}\\
\ent_\alpha(\rvDelta) &\leq \Lwencdeltaopt{s} < \ent_\alpha(\rvDelta) + \frac{1}{M}
& \prooftext{(divide by $M$)}
\end{align}
\end{subequations}
Note that in \Cref{thm_line:multiply_t_0}, we multiplied by a negative value and therefore swapped the directions of the inequalities. 
 Then, in \Cref{thm_line:multiply_t_1} we divided by a negative value and swapped the inequalities back to their original directions. 
If $s>0$, we would not change the directions of the inequalities and the proof would proceed as before.

For large $M$, we can get arbitrarily close to $\ent_\alpha$.
However, if we wish for $\ell(\delta)$ to be an integer, the upper bound becomes $\lceil \ent_\alpha(\rvDelta) \rceil$.\looseness=-1
\end{proof}

\penalizedcodbounded*
\begin{proof}
Let $\encopts{s}{\alphabettwo}$ be an $s$-optimal code, i.e., a code that minimizes $\Lwenc{s}$.
By \cref{lemma:cov}, we have
\begin{equation}
\Lexpect \cdot \sum_{\delta \in \alphabettwo} \pdelta(\delta)  |\encopts{s}{\alphabettwo}(\delta)|+ \mathrm{Cov}\left( \avgencodinglenopts{s}, \strlen \right) = \Lwenctwo{s}
\end{equation}
To prove the
bound, we proceed as follows.
Assume $s \neq 0$ as that case is covered by \cref{th:codebounded}.
We first start with a simple application of Jensen's inequality:
\begin{subequations}
\begin{align} 
\lceil \ent_\alpha(\rvDelta)\rceil  &\geq \Lwencdeltaopt{s} & \prooftext{(\cref{thm:generalized-campbell})}\\ 
&= \frac{\log_b\left(\sum_{\delta \in \alphabettwo} \pdelta(\delta) b^{s|\encopts{s}{\alphabettwo}(\delta)|}\right)}{s} & \prooftext{(definition)} \\ 
&\geq \frac{\sum_{\delta \in \alphabettwo} \pdelta(\delta) s|\encopts{s}{\alphabettwo}(\delta)| \log_b b}{s} & \prooftext{(Jensen's inequality)}\\
&= \sum_{\delta \in \alphabettwo} \pdelta(\delta) |\encopts{s}{\alphabettwo}(\delta)| 
\end{align}
\end{subequations}
Algebraically, combining the above results
\begin{equation}
\lceil \ent_\alpha(\rvDelta) \rceil \geq \frac{\Lwenctwo{s} - \mathrm{Cov}(\avgencodinglenopts{s},\strlen)}{\Lexpect} 
\end{equation}
which proves the theorem.
\end{proof}

\renyiefflower*
\begin{proof}
Recall from \Cref{def:eff_alpha} that $\eff_\alpha(\psigmastar, t) = \frac{\Lwenctwo{s}}{
\mathcal{L}_{\encuni{\alphabettwo}}(\pdeltastar)
}$, with $s=\frac{1}{a}-1$.
Again, to bound the denominator, we use the fact, that for uniform distribution, $\mathrm{Cov}\left(
\avgencodinglenuni
, \strlen \right) = 0$ and obtain $\Lexpect \log |\alphabettwo| \leq \mathcal{L}_{\encuni{\alphabettwo}}(\pdeltastar)$.
Then, we proceed as follows
\begin{subequations}
\begin{align}
\eff_\alpha(\psigmastar, t) &=\frac{\Lwenctwo{s}}{\mathcal{L}_{\encuni{\alphabettwo}}(\pdeltastar)} \\
&\leq \frac{\Lexpect \cdot \lceil \ent_\alpha(
\rvDelta
) \rceil +\mathrm{Cov}\left( \avgencodinglenopts{s}, \strlen \right)}{{\mathcal{L}_{\encuni{\alphabettwo}}(\pdeltastar)}} & \prooftext{(upper-bound numerator)} \\
&\leq \frac{\Lexpect \cdot \lceil \ent_\alpha(
\rvDelta
) \rceil +\mathrm{Cov}\left( \avgencodinglenopts{s}, \strlen \right)}{\Lexpect\cdot \log |\alphabettwo|} & \prooftext{(lower-bound denominator)} \\
&= \frac{\lceil \ent_\alpha(\rvDelta) \rceil +\frac{\mathrm{Cov}\left( \avgencodinglenopts{s}, \strlen \right)}{\Lexpect}}{\log |\alphabettwo| }
\end{align}
\end{subequations}
This proves the result.
\end{proof}

\begin{figure*}
\centering
\includegraphics[width=0.45\linewidth]{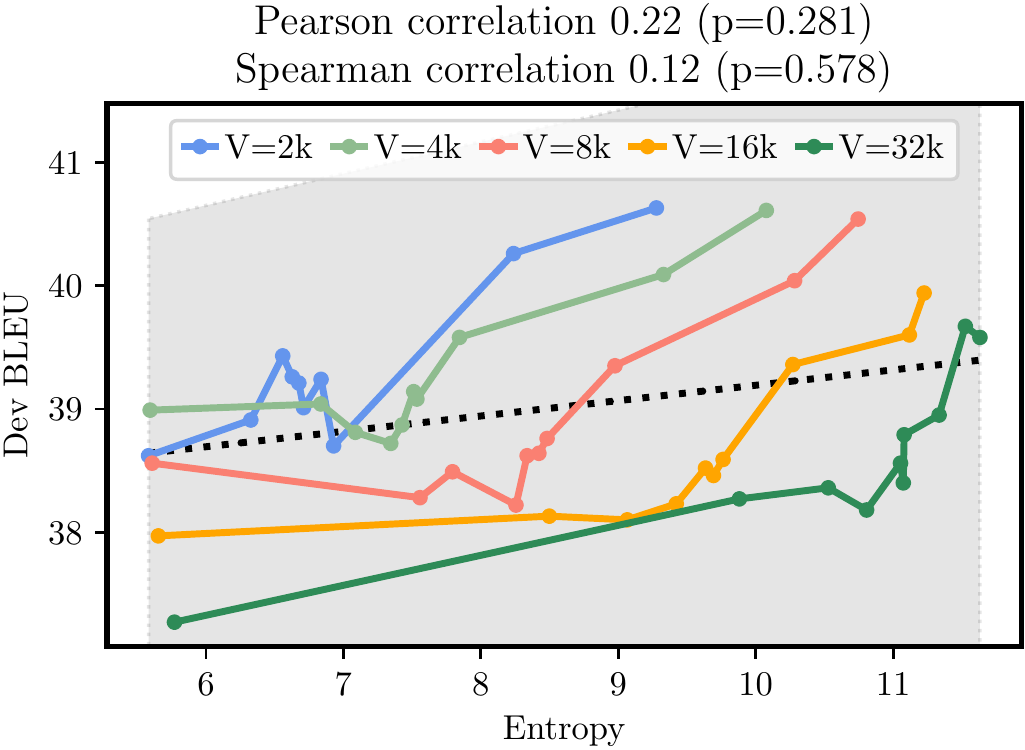}
\hspace{2mm}
\includegraphics[width=0.45\linewidth]{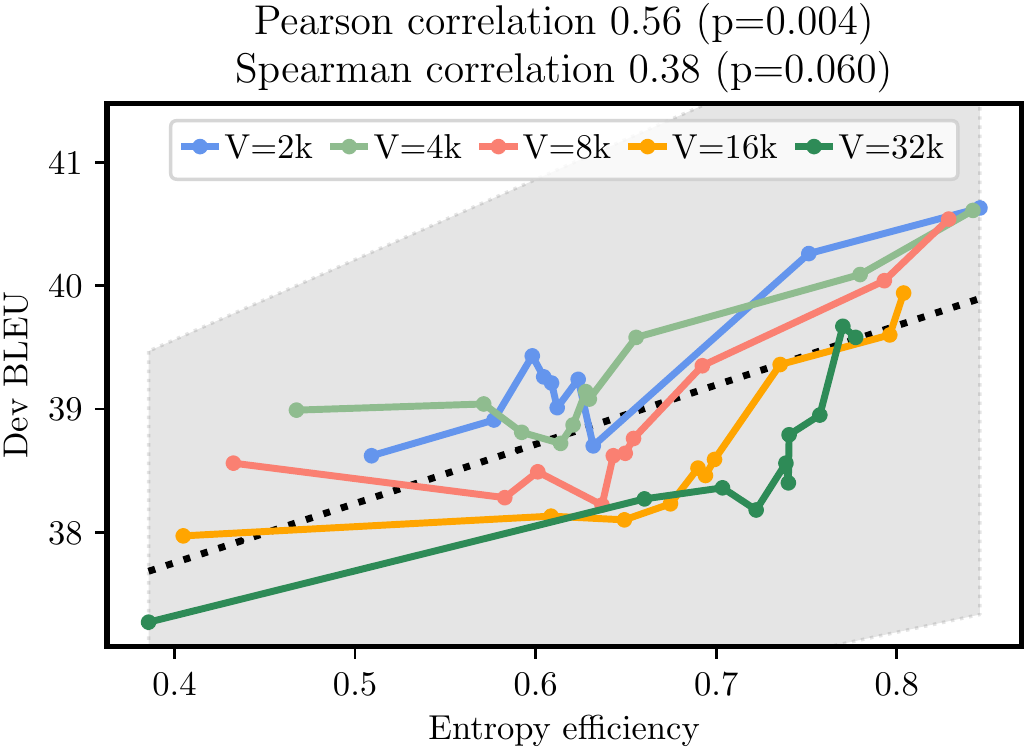}

\includegraphics[width=0.45\linewidth]{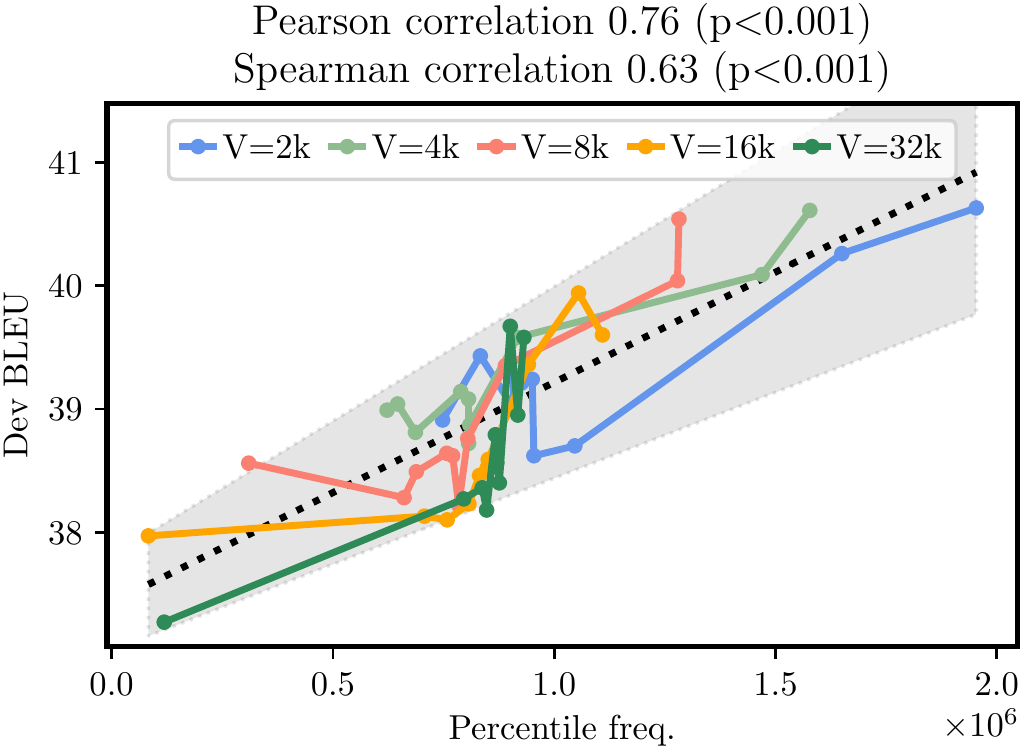}
\hspace{2mm}
\includegraphics[width=0.45\linewidth]{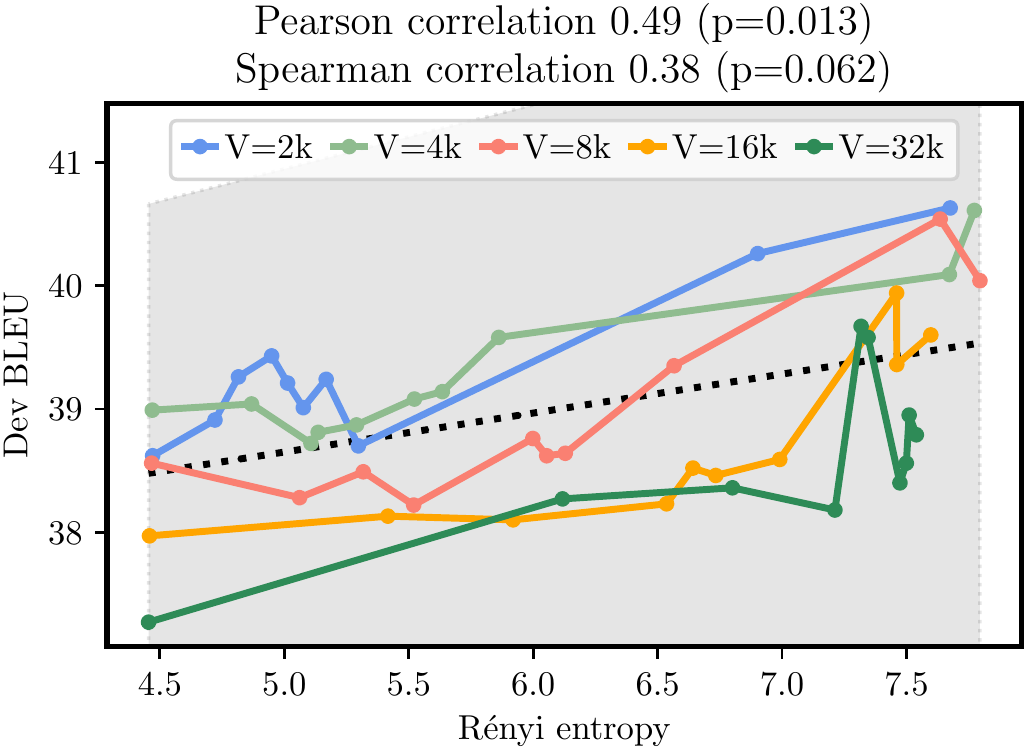}
\caption{Predictor visualization in parallel to \Cref{fig:corr_seq_len,fig:corr_renyi_eff}. Various features are used as predictors of MT performance (average of 5 runs). Bands show 95\% $t$-test confidence intervals for regression lines.}
\label{fig:other_predictor_visualization}
\end{figure*}

\section{Model, computation and reproducibility details}\label{sec:mt_details}

For the MT model, we use the \texttt{transformer\_iwslt\_de\_en} model architecture in Fairseq \citep{ott2019fairseq}.
For data, we use 1M training and 50k dev parallel sentences from English-German CommonCrawl \citep{elkishky_ccaligned_2020}.
BLEU evaluation performed using SacreBLEU \cite{post-2018-call}.
Details for the model training and the code to reproduce the experiments in this paper will be made publicly available.
For the first experiment we trained $5\times 6\times 9=270$ MT models.
For the second experiment we trained $5\times 5\times 3 \times 3=225$ MT models.
We used varying GPU models (GTX 1080, RTX 2080, RTX 3090) based on availability in shared compute cluster. Although different hardware and different tokenizations had an impact on the training time, the average per one configuration was 1.5 days.
Overall, we estimate 800 GPU days.

\section{Tokenization Schemes}
\label{sec:tokenizer_details}

In this section, we describe the four tokenization schemes used together in the second experiment.
Special attention is paid to BPE, which plays a role in the first experiment.
\Cref{tab:example_tokenization} shows how different tokenizers with varying vocabulary sizes tokenize the same word.
For simplicity, we do not include variable-length encoding, even though it has been previously applied to MT \citep{chitnis2015variable}.

\begin{table}
\centering
\begin{tabular}{l>{\ttfamily}l>{\ttfamily}l}
\toprule
\textbf{Tokenizer} & $\bm{V = 16k}$ & $\bm{V = 4k}$ \\
\midrule
BPE & Re gu lation & Re gu la tion \\
BPE $\tau = -0.4$ & Reg ul ation & Reg ul a tio n \\
Unigram & Regulation & Re gu l ation \\
WordPiece & Regul ation & Re gul ation \\
LZ & Re gul ation & Re gu la tion \\
Morfessor & Regul ation & Re gul ation \\
\bottomrule
\end{tabular}
\caption{Example tokenizations of the word \texttt{Regulation}.}
\label{tab:example_tokenization}
\end{table}

\subsection{Byte-Pair Encoding}
\label{subsec:bpe}

BPE was first discovered by \citet{gage1994} as a faster compression algorithm alternative to Lempel--Ziv--Welch.
It was later adapted by \citet{sennrich2016} as a tokenizer for MT.
The algorithm starts by splitting words of individual tokens of length 1 (characters).
Then it repeatedly takes the \textit{most frequent} pair of adjacent tokens and joins it into a new single token, adding this token to the vocabulary.
This procedure is repeated until we fill the predefined vocabulary budget.
\citet{formal_bpe} show that this greedy approach is approximately optimal when searching for the vocabulary (merge sequence).
We are however interested in intentionally suboptimal vocabularies.

\paragraph{Temperature.}

In order to introduce stochasticity into the process and intentionally alter the tokenization, instead of deterministically merging the most frequent token pair, we randomly sample a pair proportionally to their frequencies, similar to setup of \citet{saleva2023what}.
We add an additional hyperparameter to this strategy by annealing the frequency distribution with a temperature parameter $\tau$, where annealing is performed via a softmax. We use $ \tau = 0^+$ (original greedy BPE), $0.2$, $0.4$, $0.9$, $100$, $-100$, $-0.9$, $-0.4$, $-0.2$, and $0^-$.
Progressively, each temperature creates less and less optimal BPE compression model, increasing the encoded length of the data.
The last model, with $\tau=0^-$, we dub antigreedy because it always chooses the least frequent pair to merge, practically leading to a character-only model.

\subsection{Unigram LM}

While BPE repeatedly merges the most frequent pair, Unigram LM \citep{kudo2018subword} tokenizer modifies this part of the algorithm with a more complex approach.
The algorithm then jointly optimizes the token vocabulary and the unigram probability of the tokenized text.
Because of this dual objective, the algorithm is done in Expectation-Maximization manner.
We start by seeding the token vocabulary with the most frequent substrings.
In one step, probability is assigned to individual tokens and also how much worse the overall probability of the whole text would be if the single token was removed.
In the second step, top-$\eta$\% of the tokens is preserved.
These two steps are repeated until the vocabulary size is reduced to size $V$.\looseness=-1

\subsection{Linguistically Informed tokens}

Morfessor \citep{creutz2007unsupervised,virpioja2013morfessor,smit2014morfessor} is a family of unsupervised morphological analyzers that work on the minimum description principle, which is not distant from compression.
The link to tokenization is clear: Under some minimization objective, segment the input text into a list of morphemes with the constraint of at most $V$ distinct morphemes appearing in the whole text, where $V$ is a hyperparameter.
In the case of tokenization, we would use the morphemes as tokens.
For our purposes, we use vanilla Morfessor 2.0.6.

\subsection{Lempel--Ziv--Welch Compression}

BPE, a popular tokenization scheme used in many NLP applications, was first invented \citep{gage1994} as a faster compression algorithm alternative to Lempel--Ziv--Welch (LZW).
However, there is, in fact, a family of algorithms related to LZW.
LZ77 \citep{ziv1977universal} and LZW \citep{welch1984} are two of the most popular variants.

\begin{figure}
\centering
\includegraphics[width=0.33\linewidth]{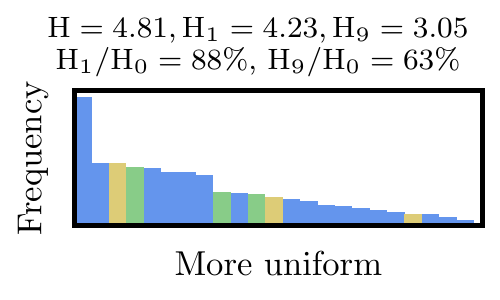}
\includegraphics[width=0.3\linewidth,trim=4mm 0 0 0,clip]{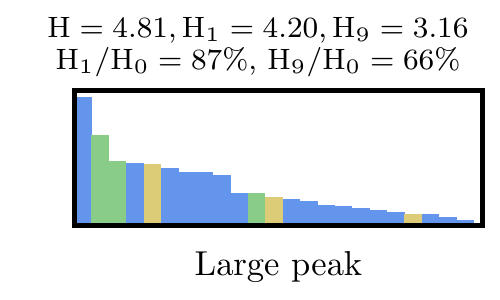}
\caption{Examples of unigram distributions of tokens with varying imbalance of probabilities. Letters corresponding to \textit{\texttt{the}} are in \colorsquare{\algcolorB} and those corresponding to \textit{\texttt{cow}} in \colorsquare{\algcolorC}.
}
\label{fig:peak_distributions}
\end{figure}

\vspace{1cm}

\begin{myexample}
\label{ex:peak_no_peak}
Consider a scenario where we want to encode a lowercased text without punctuation with the restriction of $|\alphabettwo|=28$.
Upon adding the 26 lowercased letters of the English alphabet and space, we can add one additional token.
Let us consider the tokens: \texttt{the} and \texttt{cow}.
Certainly, the former token is much more frequent in natural English distribution and hence a tokenizer fully utilizing this token would result in shorter optimal expected code length.
By adding the token \texttt{the}, we are lowering the probability mass of the most frequent English letters: \texttt{e} and \texttt{t}.
This does not happen with \texttt{cow} and hence the first distribution is more uniform.
For a depiction of this synthetic example, see \Cref{fig:peak_distributions}.
If we measure it using $\eff_1$, we obtain only $1\%$ difference but if we measure it using $\eff_9$, we get 3\% difference, showing that the first tokenization is much better.
\end{myexample}

\end{document}